\newcommand{\btheta}{\boldsymbol{\theta}}
\newcommand{\bSigma}{\boldsymbol{\Sigma}}
\newcommand{\bomega}{\boldsymbol{\omega}}
\newcommand{\calT}{\mathcal{T}}
\newcommand{\calD}{\mathcal{D}}
\newcommand{\calL}{\mathcal{L}}
\newcommand{\A}{\mathbf{A}}
\newcommand{\B}{\mathbf{B}}
\newcommand{\boldS}{\mathbf{S}}
\newcommand{\R}{\mathbf{R}}
\newcommand{\M}{\mathbf{M}}
\newcommand{\W}{\mathbf{W}}
\newcommand{\U}{\mathbf{U}}
\newcommand{\V}{\mathbf{V}}
\newcommand{\bH}{\mathbf{H}}
\newcommand{\bg}{\mathbf{g}}
\DeclareMathOperator*{\argmin}{arg\,min}
\newtheorem{theorem}{Theorem}
\newtheorem{theoremnew}{Theorem}
\begin{document}

\title{Hessian Aware Low-Rank Perturbation for Order-Robust Continual Learning}

\author{Jiaqi Li, Yuanhao Lai, Rui Wang, Changjian Shui, Sabyasachi Sahoo, Charles X. Ling, Shichun Yang, Boyu Wang, Christian Gagn\'{e}, Fan Zhou

\IEEEcompsocitemizethanks{
    \IEEEcompsocthanksitem Corresponding to Fan Zhou (E-mail: fanzhou@buaa.edu.cn).
    \IEEEcompsocthanksitem Jiaqi Li, Yuanhao Lai, Charles X. Ling, and Boyu Wang are with the Dept. of Computer Science, University of Western Ontario, London, ON, Canada. E-mail: jli3779@uwo.ca. Boyu Wang is also with the Vector Institute, Toronto, ON, Canada.
    \IEEEcompsocthanksitem Rui Wang, Shichun Yang, and Fan Zhou are with Dept. of Automotive Engineering, Beihang University, Beijing, China.
    \IEEEcompsocthanksitem Changjian Shui is with Dept. of Electrical and Computer Engineering, Mcgill University, Montreal, QC, Canada.
    \IEEEcompsocthanksitem Sabyasachi Sahoo and Christian Gagn\'{e} are with the Dept. of Electrical Engineering and Computer Engineering, Laval University, Qu\'{e}bec City, QC, Canada. Christian Gagn\'{e} is also with Mila, Quebec, Canada.
}
\thanks{Manuscript received 15 August 2023; revised 5 June 2024; accepted 22 June 2024.}
}

\markboth{Journal of \LaTeX\ Class Files,~Vol.~xx, No.~x, xxx~2023}%
{Li \MakeLowercase{{et al.}}: A Sample Article Using IEEEtran.cls for IEEE Journals}

\IEEEpubid{This work has been submitted to the IEEE for possible publication. Copyright may be transferred without notice, after which this version may no longer be accessible.}


\IEEEtitleabstractindextext{
\begin{abstract}
Continual learning aims to learn a series of tasks sequentially without forgetting the knowledge acquired from the previous ones. 
In this work, we propose the Hessian Aware Low-Rank Perturbation algorithm for continual learning. By modeling the parameter transitions along the sequential tasks with the weight matrix transformation, we propose to apply the low-rank approximation on the task-adaptive parameters in each layer of the neural networks. Specifically, we theoretically demonstrate the quantitative relationship between the Hessian and the proposed low-rank approximation. The approximation ranks are then globally determined according to the marginal change of the empirical loss estimated by the layer-specific gradient and low-rank approximation error. Furthermore, we control the model capacity by pruning less important parameters to diminish the parameter growth. We conduct extensive experiments on various benchmarks, including a dataset with large-scale tasks, and compare our method against some recent state-of-the-art methods to demonstrate the effectiveness and scalability of our proposed method. Empirical results show that our method performs better on different benchmarks, especially in achieving task order robustness and handling the forgetting issue. The source code is at \href{https://github.com/lijiaqi/HALRP}{https://github.com/lijiaqi/HALRP}.
\end{abstract}

\begin{IEEEkeywords}
Non-stationary Environment, Continual Learning, Low-rank Perturbation, Hessian information
\end{IEEEkeywords}

\IEEEdisplaynontitleabstractindextext

\IEEEpeerreviewmaketitle
}
\maketitle
   
\section{Introduction}
\label{Sect:Intro}
\IEEEPARstart{T}{he} conventional machine learning paradigm assumes all the data are simultaneously accessed and trained. However, in practical scenarios, data are often collected from different tasks and sequentially accessed in a specific order.
Continual Learning (CL) aims to gradually learn from novel tasks and preserve valuable knowledge from previous ones.
Despite this promising paradigm, CL is usually faced with a dilemma between \emph{memory stability} and \emph{learning plasticity}: when adapting among the dynamic data distributions, it has been shown that the neural networks can easily forget the learned knowledge of previous tasks when facing a new one, which is known as \emph{Catastrophic Forgetting} (CF)~\cite{de2021continual}. 

One possible reason for this forgetting issue can be the parameter drift from the previous tasks to the new ones, which is caused by the optimization process with the use of stochastic gradient descent and its variants~\cite{saha2021gradient,farajtabar2020orthogonal}. To mitigate the obliviousness to past knowledge, some works~\cite{kirkpatrick2017overcoming,jung2020continual,titsias2019functional,wang2021training,kong2023Overcoming} proposed to constrain the optimization objective for the knowledge of new tasks with additional penalty regularization terms. These approaches were shown as ineffective under the scenarios of a large number of tasks, lacking long-term memory stability in real-world applications.

To address this problem, some methods choose to expand the model during the dynamic learning process in a specific way, leading to inferencing with task-specific parameters for each task. For example, a recent work~\cite{APD:YoonKYH20} proposed to decompose the model as task-private and task-shared parameters via an additive model parameters decomposition. However, this method only applies an attention vector for the masking of the task-shared parameters. Furthermore, the private parameters were controlled by a regularization and consolidation process, which lead to the linear model increment along with the increasing task number, damaging the \emph{scalability} for deploying a CL system. Another solution to reduce the parameter increase is to apply factorization for the model parameters. In this regard, \cite{mehta2021continual} proposed a low-rank factorization method for the model parameters decomposition with a Bayesian process inference. However, this approach required a large rank for achieving desirable accuracy and also suffered from ineffectiveness in some complex data scenarios.  

In this work, we propose a low-rank perturbation method to learn the relationship between the learned model (base parameters) and the parameters for new tasks. By formulating a flexible weight transition process, the model parameters during the CL scenario are decomposed as task-shared ones and task-adaptive ones. The former can be adopted from the base task to new tasks. The latter conforms to a low-rank matrix, and the number of relevant parameters can be effectively reduced for every single layer in a neural network across the tasks. With a simple warm-up training strategy, low-rank task adaptive parameters can be efficiently initialized with \emph{singular value decomposition}.

Furthermore, to determine which ranks should be preserved for the low-rank approximation in different layers in a model, we propose to measure the influences of the introduced low-rank parameters by the \emph{Hessian-aware risk perturbation} across layers of the whole model. This allows the model to automatically assign larger ranks to a layer that contributes more to the final performance under a specific parameter size budget. We theoretically support this approach by providing a formal demonstration that the empirical losses derived from the proposed low-rank perturbation are bounded by the Hessian and the approximation rate of the decomposition. This leads to a Hessian-aware framework that enables the model to determine perturbation ranks that minimize the overall empirical error. Lastly, we apply a pruning technique to control the introduced parameter size and reduce some less important perturbation parameters through the aforementioned Hessian-aware framework by evaluating their importance.

Based on the theoretical analysis, we proposed the \textbf{H}essian \textbf{A}ware \textbf{L}ow-\textbf{R}ank \textbf{P}erturbation (HALRP) algorithm, which enables the model to leverage the Hessian information to automatically apply the low-rank perturbation according to a certain approximation rate. To summarize, the contributions of our work mainly lie in three-fold: 
\begin{itemize}
    \item We proposed a Hessian-aware low-rank perturbations framework that allows efficient memory and computation requests for learning continual tasks.
    \item Theoretical analyses were investigated to show that Hessian information can be used to quantitatively measure the influences of low-rank perturbation on the empirical risk. This leads to an automatic rank selection process to control the model increment.
    \item Extensive experiments on several benchmarks and recent state-of-the-art baselines (\emph{e.g.}, regularization-, expansion-, and replay-based) were conducted to demonstrate the effectiveness of our method. The results testify the superiority of our HALRP, in terms of accuracy, computational efficiency, scalability, and task-order robustness.
\end{itemize}

\section{Related Work}
\textbf{Regularization-based approaches} diminish catastrophic forgetting by penalizing the parameter drift from the previous tasks using different regularizers~\cite{kirkpatrick2017overcoming,jung2020continual,titsias2019functional}.
Batch Ensemble~\cite{wen2020batchensemble} designed an ensemble weight generation method by the Hadamard product between a shared weight among all ensemble members and an ensemble member-specific rank-one matrix. 
\cite{chaudhry2020continual} indicated that learning tasks in different low-rank vector subspaces orthogonal to each other can minimize task interferences. 
\cite{Yang2023Learning} applied a regularizer with decoupled prototype-based loss, which can improve the intra-class and inter-class structure significantly.
Compared to this kind of approach, our method applied task-specific parameters and introduced an explicit weight transition process to leverage the knowledge from the previous tasks and overcome the forgetting issue.

\textbf{Expansion-based methods} utilize different subsets of model parameters for each task.
\cite{rusu2016progressive} proposed to memorize the learned knowledge by freezing the base model and progressively expanding the new sub-model for new tasks. 
In \cite{yoon2017lifelong}, the reuse and expansion of networks were achieved dynamically by selective retraining, with splitting or duplicating components for newly coming tasks. Some work (\emph{e.g.},~\cite{li2019learn}) tried to determine the optimal network growth by neural architecture search. To balance memory stability and learning plasticity, \cite{Liu2023Balanced} adopted a distillation-based method under the class incremental scenario. To achieve the scalability and the robustness of task orders, Additive Parameter Decomposition (APD)~\cite{APD:YoonKYH20} adopted sparse task-specific parameters for novel tasks in addition to the dense task-shared ones and performed hierarchical consolidation within similar task groups for the further knowledge sharing.
~\cite{ge2023clr} introduced Channel-Wise Linear Reprogramming (CLR) transformations on the output of each convolutional layer of the base model as the task-private parameters. However, this method relied on the prior knowledge from a disjoint dataset (e.g., ImageNet-1K).
Winning Subnetworks (WSN)~\cite{kang2022forget} jointly learned the shared network and binary masks for each task. To address the forgetting issue, only the model weights that had not been selected in the previous tasks were tendentiously updated during training. But this method still needs to store task-specific masks for the inference stage and relies on extra compression processes for mask encoding to achieve scalability.

\textbf{Replay-/Memory-based approaches} usually leverage different types of replay buffers (\emph{e.g.}, 
\cite{riemer2018learning,rebuffi2017icarl,chaudhry2020continual,zhang2022memory,chen2023Multi,sun2021continual,ho2023prototype}) to memorize a small episode of the previous tasks, which can be rehearsed when learning the novel ones to avoid forgetting.
In~\cite{ho2023prototype}, a dynamic prototype-guided memory replay module was incorporated with an online meta-learning framework to reduce memory occupation.
\cite{wang2021triple} proposed to process both specific and generalized information by the interplay of three memory networks.
To avoid the repeated inferences on previous tasks, Gradient Episodic Memory~\cite{lopez2017gradient} and its variant~\cite{chaudhry2019efficient} projected the new gradients into a feasible region that is determined by the gradients on previous task samples. Instead, Gradient Projection Memory (GPM)~\cite{saha2021gradient} chose to directly store the bases of previous gradient spaces to guide the direction of parameters update on new tasks. Compared to these approaches, our proposed method does not rely on extra storage for the data or gradient information of previous tasks, avoiding privacy leakage under some sensitive scenarios.

\textbf{Low-rank Factorization for CL}
 Low-rank factorization~\cite{TaiXWE15_LowRank} has been widely studied in deep learning to decompose the parameters for model compression~\cite{idelbayev2020low,phan2020stable} or data projection~\cite{wang2012LowRankKernel,zhu2019LowRankSparse}. In the context of CL,~\cite{mehta2021continual} considers the low-rank model factorization and automatic rank selection per task for variational inference, which requires significant large rank increments per task to achieve high accuracy. GPM~\cite{saha2021gradient} applied the singular value decomposition on the representations and store them in the memory. \cite{chaudhry2020continual} proposed to learn tasks by low-rank vector sub-spaces to avoid a joint vector space that may lead to interferences among tasks.
 The most similar work is Incremental Rank Updates (IRU)~\cite{hyder2022incremental}. Compared to the decomposition in IRU, we adopted the low-rank approximation on the residual representation in the weights transitions. Furthermore, the rank selection in our work was dynamically and automatically determined according to Hessian-aware perturbations, rather than the manual rank increment in~\cite{hyder2022incremental}.

\begin{table}[t]
  \renewcommand\arraystretch{1.1}
\caption{List of main notations.}
\label{tab:notations}
\centering
\begin{tabular}{ll}
\toprule
Notation & Description \\ \hline
$T$,$t$ & Total task number, task index\\ 
$\mathcal{T}_t, \mathcal{D}_t$ & $t$-th task, and the related dataset\\ 
$\mathbf{x}, y$ & input/sample, class label\\ 
$\W$ & model weights\\
$\R,\boldS,\B$ & task private model weights\\
$\U,\bSigma,\V$ & SVD decomposition on $\B$ shown in Sect.~\ref{Sect:svd-background}\\
$\mathrm{diag}(\bSigma)$ & the set of diagonal elements of $\bSigma$\\
$\mathrm{Diag}(\sigma_1,\cdots,\sigma_n)$ & diagonal matrix elements $\{\sigma_1,\cdots,\sigma_n\}$\\
$\M^{(k)}$ & $k$-rank low-rank approximation of $\M$ \\
$\|\cdot\|_F$ & Frobenius norm\\
$\mathcal{L}$ & loss function for the task (e.g, cross-entropy)\\
$\mathcal{L}_{\text{reg}}$ & regularization loss\\
$\bH$ & Hessian matrix\\
$\alpha$ & loss approximation rate for rank selection\\
$l$ & layer index in a neural network\\ 
$\bg_l$ & gradient vector for layer $l$\\
$r_l$ & full rank of layer $l$\\
$k_l$ & approximated rank for layer $l$\\
$n$ & number of total training epochs\\
$n_r$ & number of warm-up epochs\\
\bottomrule
\end{tabular}
\end{table}

\section{Preliminary and Background}
This section introduces the basic knowledge of continual learning and singular value decomposition. 
In Table~\ref{tab:notations}, we provide a list of the main notations used in our paper. Specifically, the bold uppercase (or lowercase) letters indicate the matrices (or vectors) in the remaining part.

\subsection{Continual Learning}
Assume the learner receives a series of $T$ tasks $\{\mathcal{T}_0,\dots,\mathcal{T}_{T-1}\}$ sequentially, and denote the dataset of the $t^{th}$ task as $\mathcal{D}_t = \{\mathbf{x}^i_t, y^i_t\}_{i=1}^{N_t}$, where $\mathbf{x}^i_t$ and $y^i_t$ corresponds to the $i^{th}$ instance and label in the total of $N_t$ data points. In the context of CL, the dataset $\mathcal{D}_t$ will become inaccessible after the time step $t$.

\subsection{Low-Rank Approximation of Matrix with SVD}
\label{Sect:svd-background}
The singular value decomposition (SVD) factorizes a rectangular matrix $\B\in \mathbb{R}^{J\times I}$ with three matrices\footnote{In this paper, we assume the eigenvalues are always sorted with descending order in such SVD decompositions, i.e., $\sigma_1\geq\sigma_2\geq...\geq\sigma_r$.}: $\B = \U \bSigma \V^\top$, where $\U \in \mathbb{R}^{J\times J}$, $\V \in \mathbb{R}^{I\times I}$, and $\bSigma \in \mathbb{R}^{J\times I}$.
Denote the rank of $\B$ with $r\leq \min\{I,J\}$, then $\B$ can be further expressed as $\B = \sum_{i=1}^{r} \sigma_i \mathbf{u}_i \mathbf{v}_i^\top$, where $\sigma_i\in \mathrm{diag}(\bSigma)$ is the singular values, $\mathbf{u}_i$ and $\mathbf{v}_i$ are respectively the left and right singular vectors. 
In this work, we take the property of $k$-rank approximation of $\B$ (with $k\leq r$) by leveraging the \emph{Eckart–Young–Mirsky theorem}~\cite{eckart1936approximation}, which can be expressed with \emph{top-$k$ leading} singular values and vectors: $\B^{(k)} = \U^{(k)} \bSigma^{(k)} (\V^{(k)})^{\top} = \sum_{i=1}^k \sigma_i \mathbf{u}_i \mathbf{v}_i^\top$, where $\U^{(k)}$, $\bSigma^{(k)}$, $\V^{(k)}$ are the corresponding \emph{leading} principal sub-matrices\footnote{With a slight abuse of notation, the notation $\M^{(k)}$ refers to the $k$-rank approximation of the matrix $\M$ according to the context: (1) $\U^{(k)},\bSigma^{(k)}, \V^{(k)}$ indicate the top-$k$ leading submatrices; (2) $\B^{(k)}$ means the $r$-rank low-rank approximation for $\B$, then same for $\W^{(k)}$.}.
$k$ is chosen to tolerate a certain approximation error under the Frobenius norm $\|\cdot\|_{F}$ (See Appendix A for the proof):  
\begin{equation}
\begin{split}
    \| \B - \B^{(k)} \|_{F} =\sqrt{\sigma_{k+1}^2 + \dots + \sigma_r^2}.
\end{split}
\label{Eq.approxKrank}
\end{equation}

\section{Methodology}
\label{sec.Methodology}
 \begin{figure}[t]
    \centering
    \includegraphics[width=\linewidth]{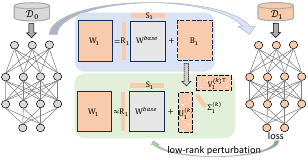} 
    \caption{Low rank decomposition between $\calT_1$ and $\calT_0$}
    \label{fig:LowRankDecomposition}
\end{figure}
Our framework leverages the low-rank approximation of neural network weights. In the following parts, we present the methodology for handling the fully connected layers and the convolutional layers. The analysis herein can be applied to any layer of the model. Without loss of generality, we omit the layer index $l$ in this section. For simplicity, we illustrate the learning process using tasks $\mathcal{T}_0$ and $\mathcal{T}_1$ in Sections~\ref{Sect:DecomposeFC} and \ref{Sect:DecomposeConv}, which can be applied to the successive new tasks as shown in Section~\ref{Sect.FullProcess}. 

\subsection{Fully Connected Layers}
\label{Sect:DecomposeFC}
We first consider linear layers of neural networks. We begin by learning task $\mathcal{T}_0$ without any constraints on the model parameters. Specifically, we train the model by minimizing the empirical risk to get the base weights 
$\W^{\text{base}} = \argmin_{\W} \calL(\W; \mathcal{D}_0)$,
where $\W^{\text{base}} \in \mathbb{R}^{J\times I}$, $J$ is the output dimension, and $I$ is the input dimension of the layer.

Then, when the task $\mathcal{T}_1$ comes to the learner, we can train the model fully on $\calD_1$ and get the updated weights $\W_1 \in \mathbb{R}^{J\times I}$. 
However, undesired model drifts can lead to worse performance on $\calT_0$ if no constraints are applied to the parameters update. 
Previous work~\cite{kirkpatrick2017overcoming} applied a $L_2$ regularization to enforce that the weights learned on the new model will not be too far from those of the previous tasks.
Although this kind of method only expands the base model with limited parameters, it can lead to worse performance on both $\calT_0$ and $\calT_1$. 

To pursue a better trade-off between the model size increment and overall performance on both $\calT_0$ and $\calT_1$,
we assume the unconstrained trained parameters $\W_1$ on $\mathcal{T}_1$ can be transformed from $\W^{\text{base}}$ by the \emph{low-rank weight perturbation} (LRWP, as illustrated in Figure~\ref{fig:LowRankDecomposition}),
\begin{equation}
    \W_{1} = \R_{1} \W^{\text{base}} \boldS_{1} + \B_{1}
    \label{Eq.LowRankDecoposition}
\end{equation}
where $\B_{1}\in \mathbb{R}^{J\times I}$ is 
a sparse low-rank matrix, $\R_{1}=\textrm{Diag}(r_{1},\dots, r_{J})$,
and $\boldS_{1} = \textrm{Diag}(s_{1},\dots, s_{I})$ are task-private parameters for $\calT_1$\footnote{We will show later that $\B$ can be further approximated by $\U^{(k)},\bSigma^{(k)},\V^{(k)}$ with low-rank approximation, so we can write the task-private weights for task $t$ as either $\{\R_t,\boldS_t,\B_t^{(k)}\}$ or $\{\R_t,\boldS_t,\U_t^{(k)},\bSigma_t^{(k)},\V_t^{(k)}\}$.}. 

The form of the proposed low-rank decomposition in Eq.~\ref{Eq.LowRankDecoposition} has differences with the additive parameter decomposition proposed in~\cite{APD:YoonKYH20}.
In fact, the task-adaptive bias term $\B_{1}$ conforms to a low-rank matrix, which reduces both the parameter storage and the computational overhead. 
Moreover, the task-adaptive mask terms $\R_{1}$ and $\boldS_{1}$ include both row-wise and column-wise scaling parameters instead of only column-wise parameters like \cite{APD:YoonKYH20} to allow smaller and possibly sparser discrepancy:
$\B_{1} = \W_{1}-\R_{1} \W^{\text{base}} \boldS_{1}.$

In addition, regarding the parameter estimation, 
we can substitute $\B_1$ with its $k$-rank approximation
$\B_1^{(k)} = \sum_{i=1}^{k} \sigma_i \mathbf{u}_i \mathbf{v}_i^\top$ 
in Eq.~\ref{Eq.LowRankDecoposition}
and directly minimize the empirical risk on task $\mathcal{T}_1$ to
get the parameter estimation through stochastic gradient descent with the random initialization of $\{\mathbf{u}_i\}_{i=1}^k$ and $\{\mathbf{v}_i\}_{i=1}^k$.
However, in practice, we found it challenging to learn a useful decomposition in this way (\emph{i.e.}, 
the estimations converge to a local optimum quickly) and 
the empirical risk minimization does not benefit from the low-rank decomposition.
Due to the homogeneity among the one-rank components $\{\mathbf{u}_i \mathbf{v}_i^{\top}\}_{i=1}^k$, random initialization cannot sufficiently distinguish them and hence make the gradient descent ineffective.

To address the aforementioned ineffective training problem,
we first train the model fully on task $\mathcal{T}_1$ for a few epochs (\emph{e.g.}, one or two) to get rough parameter estimations  $\W_1^{\text{free}}\in \mathbb{R}^{J\times I}$ of the new task, indicating free-trained weights without any constraints. Then, a good warm-up initialized values for Eq.~\ref{Eq.LowRankDecoposition}
can then be obtained by solving
the following least squared error (LSE) minimization objective,
\begin{equation}
\argmin_{\R, \boldS, \B}  \|\W_1^{\text{free}}-  \R \W^{\text{base}} \boldS - \B\|_{F}^2
\label{Eq.Minimize_warmupGoal}
\end{equation}
As $\R, \boldS, \B$ can correlate, minimizing them simultaneously can be difficult. Thus, we minimize Eq.~\ref{Eq.Minimize_warmupGoal} alternately. First, fix $\boldS, \B$ and solve $\R$, and so on.
{
\begin{equation}
\begin{split}
    &\R_1^{\text{free}} = \argmin_{\R} \|\W_1^{\text{free}} -  \R \W^{\text{base}}\|_{F}^2\\
    &\boldS_1^{\text{free}} = \argmin_{\boldS}  \|\W_1^{\text{free}}  -  \R_1^{\text{free}} \W^{\text{base}} \boldS \|_{F}^2\\
    &\B_1^{\text{free}} = \W_1^{\text{free}} - \R_1^{\text{free}} \W^{\text{base}} \boldS_1^{\text{free}}
\end{split}
\label{Eq.Solve_RSB}
\end{equation}
}

Eq.~\ref{Eq.Solve_RSB} can be solved efficiently via LSE minimization. For example, for {\small$\R$}, we have {\small$\|\W^{\text{free}} -  \R \W^{\text{base}}\|_{F}^2 = \sum_{i=1}^I \sum_{j=1}^J (w^{\text{free}}_{ji} -r_jw^{\text{base}}_{ji})^2$}, where $w^{\text{free}}_{ji}$ and $w^{\text{base}}_{ji}$ are the elements of $j$-th row and $i$-th column of {\small$\W^{\text{free}}$} and {\small$\W^{\text{base}}$}, respectively. Applying the derivative \emph{w.r.t.} $r_j$ and let it equal 0, we have $r_j = (\sum_{i} w_{ji}^{\text{free}}w_{ji}^{\text{base}})/\sum_i (w_{ji}^{\text{base}})^2$. 
Then, we can solve {\small$\boldS$} and {\small$\B$} (see Appendix C for more details).
Furthermore, with the SVD solver denoted by $\texttt{SVD} (\cdot)$,
the values of the low-rank decomposition for {\small$\B^{\text{free}}$} can be obtained:
\begin{equation}
\label{Eq.SVD_FC}
    \begin{split}
         \U_1^{\text{free}}, \bSigma_1^{\text{free}}, \V_1^{\text{free}} &\leftarrow \texttt{SVD}(\B_1^{\text{free}})
    \end{split}
\end{equation}
A $k$-rank approximation $\U_{1}^{(k) {\text{free}}}$, 
$\bSigma_{1}^{(k){\text{free}}}$, $\V_{1}^{(k) {\text{free}}}$ 
can be obtained by retaining their corresponding leading principal submatrix of order $k\leq r$. 
So we can obtain a $k$-rank approximation\footnote{With this approximation, the extra parameters introduced for each successive task will be $\mathcal{O}((I+J)(k+1)+k)$. Then the model increment ratio is $\rho=\frac{\text{size}\{\R,\boldS,\U^{(k)},\bSigma^{(k)},\V^{(k)}\}}{\text{size}\{\W^{\text{base}}\}}=\frac{(I+J)(k+1)+k}{IJ}\ll 1$ in practice.} to $\W_1^{\text{free}}$ by
\begin{equation}
\label{Eq.lowrankWfree}
\W_1^{{\text{free}}} \approx \W_1^{(k){\text{free}}} = \R_1^{\text{free}} \W^{\text{base}} \boldS_1^{\text{free}} +
\B_1^{(k){\text{free}}},
\end{equation}
where {\small$\B_1^{(k){\text{free}}} = \U_1^{(k){\text{free}}} \bSigma_1^{(k){\text{free}}} (\V_1^{(k){\text{free}}})^\top $}.

Finally, we can initialize the values in Eq.~\ref{Eq.LowRankDecoposition}
with {\small$\R_1^{\text{free}}$, $\boldS_1^{\text{free}}$, $\U_{1}^{(k){\text{free}}}$, $\bSigma_{1}^{(k){\text{free}}}$} and {\small$\V_{1}^{(k){\text{free}}}$},
and then fine-tune their estimates by minimizing the empirical risk on task $\mathcal{T}_1$ to achieve better performance.
The proposed training technique not only enables 
well-behaved estimations of the low-rank components but also 
sheds light on how to select approximation ranks of different layers to achieve an optimal trade-off between model performance and parameter size, as discussed in Section~\ref{sec.SelectRank}.

\subsection{Convolutional Layers}
\label{Sect:DecomposeConv}
In addition, we can decompose the weights of a convolutional layer in a similar way.
Suppose that the size of the convolutional kernel is $d\times d$. The base weights of the convolution layer for task $\mathcal{T}_0$ would be a tensor $\W_{\text{conv}}^{\text{base}} \in\mathbb{R}^{d\times d\times J \times I}$. 
Similar to Eq.~\ref{Eq.LowRankDecoposition}, 
a low-rank weight perturbation for transforming
$\W_{\text{conv}}^{\text{base}}$ to $\W_{\text{conv},1}\in\mathbb{R}^{d\times d\times J \times I}$ is,
\begin{equation}
    \W_{\text{conv},1} =  \R_{\text{conv},1} \otimes \W_{\text{conv}}^{\text{base}} \otimes \boldS_{\text{conv},1} \oplus \B_{\text{conv},1}
    \label{Eq.LowRankDecopositionCNN}
\end{equation}
where $\R_{\text{conv},1}\in\mathbb{R}^{1\times 1\times J \times 1}$,
$\boldS_{\text{conv},1}\in\mathbb{R}^{1\times 1\times 1 \times I}$ and 
$\B_{\text{conv},1}\in\mathbb{R}^{1\times 1\times J \times I}$ is sparse low-rank tensor (matrix), 
$\otimes$ and $\oplus$ are element-wise tensor multiplication and
summation operators that will automatically expand tensors to be of equal sizes, following the broadcasting semantics of some popular scientific computation package like Numpy~\cite{harris2020array} or PyTorch~\cite{paszke2017automatic}.
Thus, the number of parameters added through this kind of decomposition is still $\mathcal{O}(I+J)$.

To get the estimations of the introduced parameters,
we can solve a similar LSE problem as in Eq.~\ref{Eq.Minimize_warmupGoal} to obtain initial estimates
$\R_{\text{conv},1}^{\text{free}}$ and $\boldS_{\text{conv},1}^{\text{free}}$.
Then we take the average of the first two dimensions to transform the discrepancy tensor
$(\W_{\text{conv},1}^{\text{free}} - \R_{\text{conv},1}^{\text{free}} \otimes \W_{\text{conv}}^{\text{base}} \otimes \boldS_{\text{conv},1}^{\text{free}})$
to be a $1\times 1\times J \times I$ tensor, which can then be 
applied a similar decomposition with the linear layers (Eq.~\ref{Eq.SVD_FC} and~\ref{Eq.lowrankWfree}.) 
to obtain the low-rank estimates of $\B_{\text{conv},1}^{\text{free}}$.

\subsection{Rank Selection for each layer based on Hessian Information}
\label{sec.SelectRank}
Sections~\ref{Sect:DecomposeFC} and \ref{Sect:DecomposeConv} present how the low-rank approximation can be used to transfer knowledge and 
reduce the number of parameters for a single layer across the tasks in continual learning. 
However, how to select the preserved rank for each layer remains unsolved. 
We tackle this problem by measuring how the empirical risk $\calL(\W)$ is 
influenced by the introduced low-rank parameters across different layers 
so that we can assign a larger rank to a layer that contributes
more to the risk. Inspired by the previous studies~\cite{dong2019hawq,dong2020hawq} about the relationship between Hessian and quantization errors,
we establish the following Theorem~\ref{thm: hessian-perturb-influence}. The full proof is presented in Appendix B.
\begin{theorem}
\label{thm: hessian-perturb-influence}
Assume that a neural network of $L$ layers with vectorized weights 
$(\bomega^{\star}_1, \dots, \bomega^{\star}_L)$ that have converged to local optima,
such that the first and second order
optimally conditions are satisfied, \emph{i.e.}, the gradient is zero, and the Hessian is positive semi-definite.
Suppose a perturbation $\Delta \bomega^{\star}_1$ applied to the first layer
weights, then we have the loss change
\begin{equation}
    \begin{split}
        |\mathcal{L}(\bomega^{\star}_1 -\Delta \bomega^{\star}_1,& \dots, \bomega^{\star}_L) - 
        \mathcal{L}(\bomega^{\star}_1, \dots, \bomega^{\star}_L)| \\ 
        &\leq \frac{1}{2}\|\bH_1\|_F \cdot \|\Delta \bomega^{\star}_1\|^2_F
        +o\left(\|\Delta \bomega^{\star}_1\|^2_F\right),
    \end{split}
\end{equation}
where $\bH_1=\nabla^2\mathcal{L}(\bomega^{\star}_1)$ is the Hessian matrix at only the variables of the first layer weights.
\end{theorem}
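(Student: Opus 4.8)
The plan is to apply a second-order Taylor expansion of the loss at the converged point and then exploit the optimality conditions to eliminate the first-order term. Since the perturbation $\Delta\bomega^{\star}_1$ touches only the first layer, I would hold $(\bomega^{\star}_2,\dots,\bomega^{\star}_L)$ fixed and regard the loss as a function of $\bomega_1$ alone, so that the relevant gradient and Hessian are precisely the block quantities $\bg_1=\nabla\mathcal{L}(\bomega^{\star}_1)$ and $\bH_1=\nabla^2\mathcal{L}(\bomega^{\star}_1)$ evaluated at the first-layer variables. Expanding around $\bomega^{\star}_1$ then gives
\begin{equation}
\mathcal{L}(\bomega^{\star}_1-\Delta\bomega^{\star}_1,\dots,\bomega^{\star}_L)-\mathcal{L}(\bomega^{\star}_1,\dots,\bomega^{\star}_L)
= -\bg_1^{\top}\Delta\bomega^{\star}_1 + \tfrac{1}{2}(\Delta\bomega^{\star}_1)^{\top}\bH_1\,\Delta\bomega^{\star}_1 + o\!\left(\|\Delta\bomega^{\star}_1\|_F^2\right),
\end{equation}
where the remainder is genuinely $o(\|\Delta\bomega^{\star}_1\|_F^2)$ provided $\mathcal{L}$ is twice continuously differentiable near the optimum, which is implicit in the existence of $\bH_1$.

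The first simplification comes from the first-order optimality condition: at a local optimum the gradient vanishes, so $\bg_1=0$ and the linear term drops out entirely. What remains is the quadratic form $\tfrac{1}{2}(\Delta\bomega^{\star}_1)^{\top}\bH_1\Delta\bomega^{\star}_1$ plus the higher-order remainder. Taking absolute values and applying the triangle inequality, I would bound the loss change by $\tfrac{1}{2}\bigl|(\Delta\bomega^{\star}_1)^{\top}\bH_1\Delta\bomega^{\star}_1\bigr| + o(\|\Delta\bomega^{\star}_1\|_F^2)$.

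To control the quadratic form, I would use Cauchy--Schwarz together with submultiplicativity of the induced norm,
\begin{equation}
\bigl|(\Delta\bomega^{\star}_1)^{\top}\bH_1\Delta\bomega^{\star}_1\bigr| \le \|\Delta\bomega^{\star}_1\|\,\|\bH_1\Delta\bomega^{\star}_1\| \le \|\bH_1\|_{\mathrm{op}}\,\|\Delta\bomega^{\star}_1\|^2 ,
\end{equation}
and then invoke the standard matrix-norm inequality $\|\bH_1\|_{\mathrm{op}}\le\|\bH_1\|_F$, since the operator norm is the largest singular value, which is dominated by the root-sum-of-squares of all singular values. Substituting this chain into the expansion and collecting the remainder yields exactly the claimed inequality.

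The only genuinely delicate points are the deliberate looseness in replacing the operator norm by the Frobenius norm and the careful bookkeeping of the $o(\cdot)$ term. For the former, positive semidefiniteness of $\bH_1$ guarantees the quadratic form is nonnegative and equals $\lambda_{\max}\|\Delta\bomega^{\star}_1\|^2$ in the worst direction, so passing to $\|\bH_1\|_F$ is sound but not tight; for the latter, continuity of the second derivatives is what makes the Peano remainder truly $o(\|\Delta\bomega^{\star}_1\|_F^2)$ rather than merely $O$. The restriction of the perturbation to a single layer is precisely what lets me reduce the full multivariate expansion to a single-block one and discard the cross-block Hessian terms, so no extra effort is required there.
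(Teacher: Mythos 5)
Your proposal is correct and follows essentially the same route as the paper's proof: a second-order Taylor expansion in the first-layer block, cancellation of the linear term via the first-order optimality condition, and a norm bound on the remaining quadratic form. The only cosmetic difference is that you pass through Cauchy--Schwarz and the operator norm before relaxing to the Frobenius norm, whereas the paper invokes sub-multiplicativity of the Frobenius norm directly; both yield the identical bound.
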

\textbf{Remark:} Theorem~\ref{thm: hessian-perturb-influence} demonstrated the relationship between the perturbation $\Delta w^{\star}_1$ on weights and the effect on the loss objective $\Delta \mathcal{L}$. Specifically, when a weight perturbation $\Delta w^{\star}_1$ is applied to the related weight matrix $w^{\star}_1$, the perturbation introduced on the loss function is upper bounded mainly by the product of the Frobenius norms of Hessian matrix (i.e., $\|\bH_1\|_F$) and weight perturbation (i.e., $\|\Delta \bomega^{\star}_1\|^2_F$). It further inspires us to follow this rule to select the proper ranks by considering the low-rank approximation in the previous section as a perturbation to the model weights.

In our low-rank perturbation setting,
we assume that $\W_1^{\text{free}}$ by warm-up training is a local optimum. By~Theorem~\ref{thm: hessian-perturb-influence},
we consider the difference between $\W_1^{\text{free}}$ and its $k$-rank approximation $\W_1^{(k){\text{free}}}$ as a perturbation $\Delta \W_1^{\text{free}}$ for the weights.
Then the amount of perturbation can be computed with the low-rank approximation error:

\begin{equation}
\begin{split}
    \|\Delta \W_1^{\text{free}}\|_F = & 
    \|\W_1^{\text{free}} - \W_1^{(k){\text{free}}}\|_F =\sqrt{\sum_{i=k+1}^{r}\sigma_i^2}\\
\end{split}
\label{Eq.kapprximatW}
\end{equation}
where $\{\sigma_i\}_{i=1}^{r}$ are the singular values of $\W_1^{\text{free}}$
and $r$ is the matrix rank of $\W_1^{\text{free}}$.

Thus, according to Theorem~\ref{thm: hessian-perturb-influence},
the influence on the loss introduced by this low-rank weight perturbation is given by 
\begin{equation}
    \begin{split}
        |\mathcal{L}(\W_1^{(k)\text{free}}) & - \mathcal{L}(\W_1^{\text{free}}) |\\ 
        &\leq \frac{1}{2}\|\bH_1\|_F \cdot \left(\sum_{i=k+1}^{r}\sigma_{i}^2\right)
        +o\left(\sum_{i=k+1}^{r}\sigma_{i}^2\right)
    \end{split}
    \label{Eq.influenceLoss}
\end{equation}
where the Hessian matrix $\bH_1$ can be approximated by the negative empirical Fisher information  \cite{kunstner2019limitations}, \emph{i.e.},
the outer product of the gradient vector for the layer weights.
So $\|\bH_1\|_F$ can be approximated by $\| \bg_1 \|_2^2$, 
where $\bg_1=\frac{\partial\mathcal{L}}{\partial \W_1}|_{\W_1=\W_1^{\text{free}}}$. 
Finally, we can quantitatively measure the contribution of the loss of
adding a marginal rank $k$ for a
particular layer $l$ by $\| \bg_l \|_2^2 \sigma_{l,k}^2$, 
where $\bg_l$ is the gradient for the layer-$l$ weights and
$\sigma_{l,k}$ is the $k$-th singular value of the free-trained layer-$l$ weights, and sort them by the descending order of importance. 

For a given loss approximation rate $\alpha$ (\emph{e.g.}, 0.9), we can determine the rank $k_l$ (with $k_l\leq r_l$ where $r_l$ is the total rank of the layer $l$) for each layer $l=1,...,L$ by solving
\begin{equation}\label{Eq.solveRank}
\begin{aligned}
        &\min_{k_1,...,k_{L}} \quad \sum_{l=1}^{L} \sum_{i=1}^{k_l} \|  \bg_l \|_2^2 \sigma_{l,i}^2\\
        & \text{s.t.} \sum_{l=1}^{L} \sum_{i=1}^{k_l} \|  \bg_l \|_2^2 \sigma_{l,i}^2 \geq \alpha \left( \sum_{l=1}^{L} \sum_{i=1}^{r_l} \|  \bg_l \|_2^2 \sigma_{l,i}^2 \right)
\end{aligned}
\end{equation}

\textbf{Remark:} Eq.~\ref{Eq.solveRank} enables a dynamic scheme for the trade-off between the approximation precision and computational efficiency. For a given approximation rate, the model can automatically select the rank for all the layers in the model. 

\subsection{Regularization and Pruning on Parameters}
\label{Sect:Pruning}
The proposed low-rank perturbation method introduced extra parameters compared to a single-task model. For these parameters, we can further add regularization to avoid overfitting. To control the model growth, we can further prune the introduced parameters to improve memory efficiency. 

Firstly, by following ~\cite{APD:YoonKYH20}, we can add regularizations on $\U^{(k) {\text{free}}}$, $\V^{(k){\text{free}}}$, $\R^{\text{free}}$,$ \boldS^{\text{free}}$ since second task $\calT_1$ to enhance the sparsity of the task-private parameters (see Appendix D for more discussion for this fine-tuning objective):
\begin{equation}\label{Eq:loss_reg}
    \begin{aligned}
         \calL_{\text{reg}}(\W_t; \lambda_0, \lambda_1) =\sum_{l}&\Big[\lambda_0 (  \| \U_{t,l}^{(k_l) \text{free}}  \| + \| \V_{t,l}^{(k_l) \text{free}}\|) \\ 
         & + \lambda_1 (\|\R_{t,l}^{\text{free}}\|_2^2 +\| \boldS_{t,l}^{\text{free}} \|_2^2 \\
         & + \| \U_{{t,l}}^{(k_l) \text{free}} \|_2^2 + \|\V_{t,l}^{(k_l) \text{free}} \|_2^2) \Big]
    \end{aligned}
\end{equation}
where $\lambda_0,\lambda_1$ are balancing coefficients, and the subscripts $(t,l)$ indicate the relevant weights for layer $l$ in task $t$.

Secondly, we can also prune the extra parameters by setting zero values for elements whose absolute values are lower than a certain threshold.
The threshold can be selected in the following three ways:
\begin{itemize}
    \item[(1)] Pruning via absolute value: 
    a fixed tiny positive value (\emph{e.g.}, $10^{-5}$) is set for the threshold.
    \item[(2)] Pruning via relative percentile: to control the ratio of increased parameter size over a single-task model size under $\gamma$,
    the pruning threshold is selected as the $(1-\gamma)$-percentile of the low-rank parameters among all layers of all tasks.
    \item[(3)] Pruning via mixing absolute value and relative percentile: we set a threshold as the maximum of the thresholds obtained from the above two methods to prune using relative percentiles.
\end{itemize}

\subsection{Summary of Algorithm}
\label{Sect.FullProcess}
 \begin{algorithm}[t]
    \caption{Hessian Aware Low-Rank Perturbation}
   \label{Alg.HALRP}
 	\begin{algorithmic}[1] 
 		\REQUIRE  Task data $\{\calD_t\}_{t=0}^{T-1}$; total epochs for one task $n$; rank estimation epochs $n_{r}$; parameter increments limitation ratio $p$, approximation rate $\alpha$.
         \ENSURE Base weights $\W^{\text{base}}$ and $\{\W_t^{\star}\}_{t=1}^{T-1}$ for each task.
         \STATE Obtain $\W^{\text{base}} = \argmin_{\W} \calL(\W; \mathcal{D}_0)$ on task $\mathcal{T}_0$.\\
         \FOR{$t=1,\cdots, T-1$}
        \STATE Warm-up pre-training on task $\mathcal{T}_t$ for $n_{r}$ epochs:  $\W^{\text{free}}_t=\arg \min_{\W} \calL(\W; \mathcal{D}_t)$.\\
        \STATE  Low-rank decomposition for all layers via Eq.~\ref{Eq.LowRankDecoposition} or Eq.~\ref{Eq.LowRankDecopositionCNN}: {\small $\W^{\text{free}}_{t} = \R^{\text{free}}_{t} \W^{\text{base}} \boldS^{\text{free}}_{t} + \B^{\text{free}}_{t}$}.
    \STATE Apply $\U_{t}^{\text{free}}, \bSigma_{t}^{\text{free}}, \V_{t}^{\text{free}} \leftarrow \texttt{SVD}(\B^{\text{free}}_{t})$.
        \STATE Select the ranks $k_l$ for each layer $l$ through Eq.~\ref{Eq.solveRank}.
    \STATE Re-initialize the task $\mathcal{T}_t$ parameters with Eq.~\ref{Eq.lowrankWfree}.
    \STATE Fine-tuning on $\mathcal{T}_t$ for $(n-n_r)$ epochs with:
        \begin{small}
        \begin{equation}
        \label{Eq.Crieterion}
        \W_t^{\star} = \arg\min_{\W} \left[\calL(\W; \mathcal{D}_t) + \calL_{\text{reg}}(\W; \lambda_0, \lambda_1) \right]
    \end{equation}
    \end{small}
        \STATE If the size of the introduced parameters is larger than a threshold $p$, apply the pruning method in Section~\ref{Sect:Pruning}.
        \ENDFOR
        \RETURN $\W^{\text{base}}$ and $\{\W_t^{\star}\}_{t=1}^{T-1}$.
        \end{algorithmic}
        \label{HAWLRP}
\end{algorithm}
In the previous sections, we described the model update from $\mathcal{T}_0$ to $\mathcal{T}_1$ with an illustrative example. The overall description of our proposed HALRP is shown in Algorithm~\ref{Alg.HALRP}. 

At the start, the learner was trivially trained on the first task $\mathcal{T}_0$ to obtain $\W^{\text{base}}$. As for each incoming task $\mathcal{T}_t$ with $t=1,...,T-1$, we first train the model without any constraints for $n_r$ epochs to get a rough initialization $\W^{\text{free}}_t$. Secondly, we apply the low-rank decomposition on all layers with Eq.~\ref{Eq.LowRankDecoposition} (for linear layers) or Eq.~\ref{Eq.LowRankDecopositionCNN} (for convolutional layers) by solving a least square error minimization problem described in Eq.~\ref{Eq.Solve_RSB}. Then, we further apply the singular value decomposition for the residual matrix $\B^{\text{free}}_t$. With this decomposition, we can measure the Hessian-aware perturbations and the rank $k_l$ for each layer $l$ through Eq.~\ref{Eq.solveRank}. After the rank selection, we can re-initialize the model parameters with the approximated weights $\W^{(k)\text{free}}_t \approx \W^{\text{free}}_t$ and then fine-tune the model for the remaining $(n-n_r)$ epochs to obtain the optimal weights for $\mathcal{T}_t$. As for the inference stage, the base weights and the task-specific parameters can be adopted to make predictions for each task.

\section{Experimental Results}
We first compare the accuracy over several recent baselines with standard CL protocol. Then, we studied the task order robustness, forgetting, memory cost, training time efficiency, and the ablation study to show the effectiveness further. 
We briefly describe the experimental setting herein while delegating more details in Appendix F.

\subsection{Experimental Settings}

\textbf{Datasets}:
We evaluate the algorithm on the following datasets: 1) \textbf{CIFAR100-Split}: Split the classes into 10 groups; for each group, consider a 10-way classification task. 2) \textbf{CIFAR100-SuperClass}: It consists of images from 20 superclasses of the CIFAR-100 dataset.
3) \textbf{Permuted MNIST (P-MNIST)}: obtained from the MNIST dataset by random permutations of the original MNIST pixels. We follow~\cite{ebrahimi2019uncertainty} to create 10 sequential tasks using different permutations, and each task has 10 classes. 4) \textbf{Five-dataset}: It uses a sequence of 5 different benchmarks including CIFAR10~\cite{krizhevsky2009learning}, MNIST~\cite{deng2012mnist}, notMNIST~\cite{bulatov2011notmnist}, FashionMNIST~\cite{xiao2017fashion} and SVHN~\cite{yuval2011reading}. Each benchmark contains 10 classes.
5)~\textbf{Omniglot Rotation} \cite{lake2015human}:  100 12-way classification tasks. The rotated images in $90^\circ$, $180^\circ$, and $270^\circ$ are generated by following~\cite{APD:YoonKYH20}. 
6)~\textbf{TinyImageNet}: a variant of ImageNet~\cite{deng2009imagenet} dataset containing 200 classes. Here, we adopted two settings, one with 20 10-way classification tasks (\textbf{TiyImageNet 20-split}) and another more challenging setting with 40 5-way classification tasks (\textbf{TinyImageNet 40-split}).

\textbf{Baselines}:
We compared the following baselines by the publicly released code or our re-implementation:
\textbf{STL}: single-task learning with individual models for each task.
\textbf{MTL}: multi-task learning with a single for all the tasks simultaneously~\cite{zhang2022SurveyMultiTask}.
\textbf{EWC}:~\emph{Elastic Weight Consolidation} method proposed by~\cite{kirkpatrick2017overcoming}.~\textbf{L2}: the model is trained with $L_2$-regularizer~\cite{kirkpatrick2017overcoming} $\lambda \| \btheta_t -\btheta_{t-1} \|_{2}^2$ between the current model and the previous one.
\textbf{BN}: The \emph{Batch Normalization} method~\cite{ioffe2015batch}.
\textbf{BE}: The \emph{Batch Ensemble} method proposed by~\cite{wen2020batchensemble}.
\textbf{APD}: The \emph{Additive Parameter Decomposition} method~\cite{APD:YoonKYH20}. Each layer of the target network was decomposed into task-shared and task-specific parameters with mask vectors.
\textbf{APDfix}: We modify the APD method by fixing the model parameters while only learning the mask vector when a new task comes to the learner.
\textbf{IBPWF}: Determine the model expansion with non-parametric Bayes and weights factorization~\cite{mehta2021continual}.
\textbf{GPM}~\cite{saha2021gradient}: A replay-based method by orthogonal gradient descent.
\textbf{WSN}~\cite{kang2022forget}: Introduce learnable weight scores to generate task-specific binary masks for optimal subnetwork selection. 
\textbf{BMKP}~\cite{sun2023decoupling}: A bilevel memory framework for knowledge projection: a working memory to ensure plasticity and a long-term memory to guarantee stability.
\textbf{CLR}~\cite{ge2023clr}: An \emph{expansion-based} method by applying Channel-Wise Linear Reprogramming transformations on each convolutional layer in the base model. This method originally relies on a model pre-trained on an extra dataset (i.e., ImageNet-1K) disjoint with the above task datasets, rather than the \emph{train-from-scratch} manner adopted by other baselines. To make a fair comparison, we pre-trained the base model on the first task of the above task datasets and applied it to all tasks.
\textbf{PRD}~\cite{asadi2023prototype}: Prototype-sample relation distillation with supervised contrastive learning.
Furthermore, we also compared our methods with \textbf{IRU}~\cite{hyder2022incremental}, a method also based on the low-rank decomposition. However, due to the code limitation\footnote{IRU code only contains the implementation on the multi-layer perceptron. See https://github.com/CSIPlab/task-increment-rank-update} of \textbf{IRU}~\cite{hyder2022incremental}, we further implemented our method under the dataset protocol and model architecture setting in~\cite{hyder2022incremental} and compared our performance with the results reported in~\cite{hyder2022incremental}. For the implementation of all the methods, we applied the same hyperparameters (e.g., batch size, training epochs, regularization coefficient) to realize fair comparisons. See Appendix F.3 for more details.

\textbf{Model Architecture}: For CIFAR100-Split, CIFAR100-SuperClass, and P-MNIST datasets, we adopted LeNet as the base model. As for Five-dataset and ImageNet datasets, we evaluated with AlexNet and reduced ResNet18 networks wehre the latter has reduced filters compared to standard ResNet18 (see Appendix F.2). And we followed~\cite{kang2022forget} to use an extended LeNet model on the Omniglot-Rotation dataset.

\textbf{Evaluation Metrics}: We mainly adopted the average of the accuracies of the final model on all tasks (we simply call ``accuracy'' or ``Acc.'' later) for the empirical comparisons. As for the forgetting statistics, we applied backward transfer (BWT)~\cite{saha2021gradient} as a quantitive measure. Especially, we also compared the order-robustness of different methods by calculating the \emph{Order-normalized Performance Disparity} that will be introduced later. For each single experiment (\emph{e.g.}, each task order), we repeat with five random seeds to compute the average and standard error.

\begin{table*}[t]
\centering
\resizebox{0.75\linewidth}{!}{\Large
\begin{tabular}{@{}l|cccc|cccc@{}}
\toprule
\multicolumn{1}{c|}{} & \multicolumn{4}{c|}{CIFAR100-Split (with LeNet)}                                      & \multicolumn{4}{c}{CIFAR100-SuperClass (with LeNet)}                                  \\ 
Method                & 5\%              & 25\%             & 50\%             & 100\%             & 5\%              & 25\%             & 50\%             & 100\%              \\ \midrule
STL                   & 45.13 $\pm$ 0.04 & 59.04 $\pm$ 0.03 & 64.38 $\pm$ 0.06 & 69.55 $\pm$ 0.06 & 43.76 $\pm$ 0.68 & 56.09 $\pm$ 0.07 & 60.06 $\pm$ 0.06 & 64.47 $\pm$ 0.05 \\
MTL                   & 44.95 $\pm$ 0.11 & 60.21 $\pm$ 0.28 & 65.65 $\pm$ 0.20 & 69.70 $\pm$ 0.28 & 40.43 $\pm$ 0.15 & 49.88 $\pm$ 0.27 & 53.83 $\pm$ 0.27 & 55.62 $\pm$ 0.41 \\ \midrule
L2                    & 37.15 $\pm$ 0.21 & 48.86 $\pm$ 0.28 & 53.35 $\pm$ 0.34 & 58.09 $\pm$ 0.43 & 34.03 $\pm$ 0.08 & 43.40 $\pm$ 0.27 & 46.10 $\pm$ 0.28 & 48.75 $\pm$ 0.24 \\
EWC                   & 37.76 $\pm$ 0.20 & 50.09 $\pm$ 0.38 & 55.65 $\pm$ 0.40 & 60.53 $\pm$ 0.26 & 33.70 $\pm$ 0.32 & 44.02 $\pm$ 0.39 & 47.35 $\pm$ 0.47 & 49.97 $\pm$ 0.39 \\
BN                    & 37.60 $\pm$ 0.17 & 50.70 $\pm$ 0.28 & 54.79 $\pm$ 0.28 & 60.34 $\pm$ 0.40 & 36.76 $\pm$ 0.14 & 48.20 $\pm$ 0.16 & 51.43 $\pm$ 0.16 & 55.44 $\pm$ 0.19 \\
BE                    & 37.63 $\pm$ 0.15 & 51.13 $\pm$ 0.3  & 55.37 $\pm$ 0.28 & 61.09 $\pm$ 0.33 & 37.05 $\pm$ 0.20 & 48.48 $\pm$ 0.14 & 51.78 $\pm$ 0.17 & 55.97 $\pm$ 0.17 \\
APD                   & 36.60 $\pm$ 0.14 & 54.59 $\pm$ 0.07 & 59.71 $\pm$ 0.03 & 66.54 $\pm$ 0.03 & 32.81 $\pm$ 0.29 & 49.00 $\pm$ 0.06 & 52.64 $\pm$ 0.19 & 60.54 $\pm$ 0.23 \\
APDfix                & 35.66 $\pm$ 0.33 & 54.62 $\pm$ 0.11 & 59.86 $\pm$ 0.24 & 66.64 $\pm$ 0.14 & 24.27 $\pm$ 0.22 & 48.71 $\pm$ 0.11 & 53.42 $\pm$ 0.12 & 61.47 $\pm$ 0.16 \\
IBWPF                 & 38.35 $\pm$ 0.26     & 47.87 $\pm$ 0.25       & 53.46 $\pm$ 0.13      & 57.13 $\pm$0.15    & 33.09 $\pm$ 0.50      & 51.32 $\pm$ 0.27     & 52.52 $\pm$ 0.26     & 55.98 $\pm$ 0.33     \\
GPM                 &  32.86 $\pm$ 0.35      & 51.61 $\pm$ 0.22       & 57.60 $\pm$ 0.19      & 64.49 $\pm$ 0.10       &  34.88 $\pm$ 0.30      & 47.31 $\pm$ 0.51     & 51.23 $\pm$ 0.55      &  57.91 $\pm$ 0.28      \\
WSN                   & 37.01 $\pm$ 0.63       & 55.21 $\pm$ 0.59       & 61.56 $\pm$ 0.42       & 66.56 $\pm$ 0.49       & 36.89 $\pm$ 0.49       & 52.42 $\pm$ 0.62      & 58.23 $\pm$ 0.38      & 61.81 $\pm$ 0.54      \\ 
BMKP                   & 42.36 $\pm$ 0.90       & 56.81 $\pm$ 1.05       & 62.87 $\pm$ 0.60      & 66.95 $\pm$ 0.53       & 37.26 $\pm$ 0.87       & 53.62 $\pm$ 0.59      & 57.76 $\pm$ 0.66      & 61.97 $\pm$ 0.19      \\ 
CLR                   & 36.46 $\pm$ 0.29       & 51.44 $\pm$ 0.35       & 57.00 $\pm$ 0.43      & 61.83 $\pm$ 0.60       & 37.93 $\pm$ 0.25       & 49.82 $\pm$ 0.56      & 53.86 $\pm$ 0.63      & 57.07 $\pm$ 0.60      \\ 
PRD                   & 31.58 $\pm$ 0.29       & 56.07 $\pm$ 0.22       & 59.61 $\pm$ 0.33      & 62.74 $\pm$ 0.45       & 33.34 $\pm$ 0.48       & 52.99 $\pm$ 0.28      & 55.85 $\pm$ 0.45      & 57.80 $\pm$ 0.50      \\ 
\midrule
HALRP                & \textbf{45.09 $\pm$ 0.05}     & \textbf{58.94 $\pm$ 0.09} & \textbf{63.61 $\pm$ 0.08} & \textbf{67.92 $\pm$ 0.17} & \textbf{43.84 $\pm$ 0.04} & \textbf{54.93 $\pm$ 0.04} & \textbf{58.68 $\pm$ 0.11} & \textbf{62.56 $\pm$ 0.30} \\ \bottomrule
\end{tabular}
}
\caption{Accuracies$\uparrow$ on CIFAR100-Split/-SuperClass with different percentages of training data.}
\label{tab:experiments_cifar}
\end{table*}

\subsection{Empirical Accuracy}
\label{Sect:EmpiricalAcc}
We provide the average accuracies on the six benchmarks in Table~\ref{tab:experiments_cifar} and Table~\ref{tab:PMNIST_FIVE_OMNI}. 
From these numerical results, we can conclude our method achieved state-of-the-art performance compared to the baseline methods.

According to the evaluations on CIFAR100-Split and CIFAR100-SuperClass in Table~\ref{tab:experiments_cifar}, we can observe that our proposed HALRP outperforms the recent methods (\emph{e.g.}, GPM (replay-based), APD and WSN (expansion-based)) with a significant margin (\emph{e.g.}, with an improvement about $1\%\sim3\%$). Especially, we also evaluated the performances of all the methods under different amounts of training data (\emph{i.e.}, $5\%\sim100\%$) on these two benchmarks. It was interesting to see that our proposed method had advantages in dealing with extreme cases with limited data. Compared to other methods, the performance margins become more significant with fewer training data. For example, on CIFAR100-Split, our proposed HALRP outperforms APD and WSN with an improvement of $1.28\%$ and $1.36\%$ respectively, but these margins will dramatically rise to $\sim8\%$ if we reduce the training set to $5\%$ of the total data. These results showed that our method can work better in these extreme cases.

On Five-dataset, we adopted two types of neural network, \emph{i.e.}, AlexNet and ResNet18, to verify the effectiveness of HALRP under different backbones. We can observe that our method consistently outperforms the other methods under different backbones, indicating the applicability and flexibility for different model choices. Compared with the analyses in the following part, we can conclude that our method is also robust under different orders of tasks.

Especially, we also conducted experiments on the Omniglot-Ratation dataset to demonstrate the scalability of our method. We follow~\cite{kang2022forget} to learn the 100 tasks under the default sequential order. The average accuracy was shown in Table~\ref{Tab.SubOmniglot}. We demonstrate that our method is applicable to a large number of tasks and can still achieve comparable performance with limited model increment.

On TinyImageNet dataset, we evaluated our methods on two settings, one with 20-split and another with 40-split, and the results are posted in Table~\ref{Tab.SubTiny}. According to the empirical results, we can conclude that our proposed method can perform well on this challenging dataset, with a better trade-off between the average accuracy and task order robustness. Compared to the previous methods APD and APDfix that aim to address the issue of task order robustness, our HALRP can perform well in a more consistent manner. Furthermore, our methods also have advantages regarding computational efficiency and memory consumption that we will discuss later in the following sections.

Additionally, we also provide comparisons with IRU~\cite{hyder2022incremental} that is also based on low-rank decomposition. We note that the official code of IRU only contains a demo for multi-layer perceptron (MLP), and no complete code for the convolutional network (even LeNet) can be found. Due to this limitation, we cannot implement IRU under our setting. To make fair comparisons, we re-implement our method under the setting of IRU~\cite{hyder2022incremental}. We describe the setting as follows: 
1) Dataset Protocol: \cite{hyder2022incremental} generated 20 random tasks on PMNIST, rather than 10 tasks in this work. Moreover, \cite{hyder2022incremental} randomly divided the CIFAR100 dataset into 20 tasks, rather than the superclass-based splitting in our paper. Thus, we denote the two datasets in~\cite{hyder2022incremental} as ``PMNIST-20" and ``CIFAR100-20'', to distinguish them from the ``PMNIST" and ``CIFAR100-SuperClass'' used in the common setting.
(2) Model Architecture: Apart from the architectures we introduced before, we also follow the MLP used in~\cite{hyder2022incremental}, a three-layer (fully-connected) multilayer perceptron with 256 hidden nodes. The results of our method and IRU are listed in Table~\ref{tab:res-iru}. We can conclude that our method achieved a large performance gain compared to IRU.

In the following parts, we will further investigate the empirical performance under different aspects, i.e., task order robustness, forgetting statistics, model growth, and time complexity. We will show that our proposed method can achieve a better trade-off among these realistic metrics apart from the average accuracy.

\begin{table*}[t]
\centering
\begin{subtable}[b]{0.25\linewidth}
\centering
\resizebox{0.9\linewidth}{!}{\Large
\begin{tabular}{@{}l|ccc@{}}
\toprule
    & \multicolumn{3}{c}{P-MNIST} \\
    & \multicolumn{3}{c}{LeNet} \\ \cmidrule(l){2-4} 
\multicolumn{1}{c|}{Method} & Acc.$\uparrow$& MOPD$\downarrow$&AOPD$\downarrow$\\ \midrule
STL& 98.24$\pm$0.01& 0.15& 0.09\\
MTL& 96.70$\pm$0.07&1.58&0.81\\ \hline
L2&79.14$\pm$0.70&29.66&18.94\\
EWC&81.69$\pm$0.86&21.51&12.16\\
BN&81.04$\pm$0.15& 19.77&8.18\\
BE&83.80$\pm$0.08&16.76&6.88\\
APD&97.94$\pm$0.02&0.25&0.16\\
APDfix&97.99$\pm$0.01& 0.10&0.11\\
GPM& 96.69$\pm$0.02 & 0.45 & 0.27\\
WSN& 97.91$\pm$0.02 & 0.36  & 0.22\\ 
BMKP& 97.08$\pm$0.01 & 3.21  & 1.04\\
CLR& 88.55$\pm$0.20 & 14.97  & 7.88\\
PRD& 83.16$\pm$0.17 & 7.91  & 6.16\\
\midrule                    
HALRP&\multicolumn{1}{c}{\textbf{98.10$\pm$0.03}}& \multicolumn{1}{c}{0.47} & \multicolumn{1}{c}{0.24} \\ \bottomrule
\end{tabular}
}
\caption{Results on P-MNIST.}
\label{Tab.SubPMNIST}
\end{subtable}
\centering
\begin{subtable}[b]{0.45\linewidth}
\centering
\resizebox{0.9\linewidth}{!}{\Large
\begin{tabular}{@{}l|cccccc@{}}
\toprule
    & \multicolumn{6}{c}{Five-dataset}   \\ \cmidrule(l){2-7} 
    & \multicolumn{3}{c|}{AlexNet}   & \multicolumn{3}{c}{ResNet-18}    \\ \cmidrule(l){2-7} 
\multicolumn{1}{c|}{Method} & Acc.$\uparrow$& MOPD$\downarrow$& \multicolumn{1}{c|}{AOPD$\downarrow$} & \multicolumn{1}{c}{Acc.$\uparrow$} & \multicolumn{1}{c}{MOPD$\downarrow$} & \multicolumn{1}{c}{AOPD$\downarrow$} \\ \midrule
STL           &89.32$\pm$0.06& 0.74& \multicolumn{1}{c|}{0.30}& 94.24$\pm$0.05&0.67&0.24\\
MTL             &88.02$\pm$0.18&2.08& \multicolumn{1}{c|}{0.68}&93.82$\pm$0.06&0.67&0.30\\ \hline
L2          & 78.24$\pm$2.00 & 35.11&\multicolumn{1}{c|}{15.35}&85.94$\pm$2.79&37.03&12.72\\
EWC         & 78.44$\pm$2.20&33.29& \multicolumn{1}{c|}{13.18}&86.32$\pm$2.80&33.84&11.80\\
BN      &82.35$\pm$2.45&34.83& \multicolumn{1}{c|}{12.56}&88.36$\pm$2.21&30.22&9.55\\
BE  &82.91$\pm$2.37&33.91& \multicolumn{1}{c|}{11.63}&88.75$\pm$2.14&29.22& 8.97\\
APD     &83.70$\pm$0.90&4.80& \multicolumn{1}{c|}{3.45}&92.18$\pm$0.28&3.50  &1.54\\
APDfix      &84.03$\pm$1.24&5.50& \multicolumn{1}{c|}{3.66}&91.91$\pm$0.48      & 6.74&1.98\\
GPM   & 87.27$\pm$0.61  & 4.54 & \multicolumn{1}{c|}{1.88} &  88.52$\pm$0.28  & 6.97 & 2.82\\
WSN          &86.74$\pm$0.40 & 8.54 &\multicolumn{1}{c|}{2.89}&92.58$\pm$0.39 & 4.62 & 1.21\\ 
BMKP         &84.03$\pm$0.55 & 9.32 &\multicolumn{1}{c|}{3.07}& 92.57$\pm$0.65 & 9.08  & 2.13 \\ 
CLR         &86.68$\pm$1.41 & 19.78 &\multicolumn{1}{c|}{7.08}& 90.04$\pm$1.04 & 14.05  & 4.51 \\ 
PRD         & 74.74$\pm$0.69 & 17.53 &\multicolumn{1}{c|}{9.23}& 88.45$\pm$0.93 & 14.17  & 5.37 \\ 
\midrule
HALRP                      &\multicolumn{1}{c}{\textbf{88.81$\pm$0.31}} & \multicolumn{1}{c}{4.28}&\multicolumn{1}{c|}{1.31}&\textbf{93.39$\pm$0.30} & 4.39 & 1.27\\ \bottomrule
\end{tabular}
}
\caption{Results on Five-dataset with different backbones.}
\label{Tab.SubFive}
\end{subtable}
\centering
\begin{subtable}[b]{0.25\linewidth}
\centering
\resizebox{0.9\linewidth}{!}{\Large

\begin{tabular}{@{}l|ccc@{}}
\toprule
    & \multicolumn{3}{c}{Omniglot-Rotation}   \\
    & \multicolumn{3}{c}{LeNet}    \\ 
    \cmidrule(l){2-4} 
\multicolumn{1}{c|}{Method} & Acc.$\uparrow$& MOPD$\downarrow$&AOPD$\downarrow$\\ 
\midrule
STL& 80.93$\pm$0.18 & 20.83 & 3.42\\
MTL& 93.95$\pm$0.11& 6.25 & 2.11 \\ 
\hline
L2& 69.86$\pm$1.23 & 17.23 & 6.96 \\
EWC& 69.75$\pm$1.28 & 21.39 & 7.02 \\
BN& 77.08$\pm$0.86 & 14.41 & 5.58 \\
BE& 78.24$\pm$0.69 & 17.17 & 5.53\\
APD($\star$)& 81.60$\pm$0.53 & 8.19 & 3.78\\
APDfix& 78.14$\pm$0.12 & 6.53 & 2.63\\
GPM& 80.41$\pm$0.16 & 28.33 & 13.02 \\
WSN& 82.55$\pm$0.44 & 17.09 & 7.57 \\
BMKP& 81.12$\pm$2.71 & 26.53 & 16.17\\
CLR& 72.75$\pm$1.41 & 24.30 & 12.27\\
PRD& 74.49$\pm$2.78 & 49.17 & 18.44\\
\midrule                    
HALRP& \textbf{83.08$\pm$0.73} & 10.36 & 3.91 \\ 
\bottomrule
\end{tabular}
}
\caption{Results on Omniglot.}
\label{Tab.SubOmniglot}
\end{subtable}

\centering
\begin{subtable}[b]{0.9\linewidth}
\centering
\resizebox{0.9\linewidth}{!}{\Large
\begin{tabular}{@{}l|ccc|ccc|ccc|ccc@{}}
\toprule
    & \multicolumn{6}{c|}{TinyImageNet 20-split} & \multicolumn{6}{c}{TinyImageNet 40-split} \\ 
    \cmidrule(l){2-13} 
    & \multicolumn{3}{c|}{AlexNet} & \multicolumn{3}{c|}{ResNet18} & \multicolumn{3}{c|}{AlexNet} & \multicolumn{3}{c}{ResNet18} \\ 
    \cmidrule(l){2-13} 
Method & Acc.$\uparrow$ &  MODP$\downarrow$   & AODP$\downarrow$    & Acc.$\uparrow$ &  MODP$\downarrow$   & AODP$\downarrow$ & Acc.$\uparrow$ &  MODP$\downarrow$   & AODP$\downarrow$    & Acc.$\uparrow$ &  MODP$\downarrow$   & AODP$\downarrow$    \\ \midrule
STL& 66.78$\pm$0.18 & 6.20  &  \multicolumn{1}{c|}{3.43} & 67.00$\pm$0.30 & 5.87  & 2.87  & 74.65$\pm$0.17 & 8.16  &  \multicolumn{1}{c|}{4.36} & 74.08$\pm$0.25 & 8.13  &  4.08 \\
MTL& 71.23$\pm$0.54 & 6.87  &  \multicolumn{1}{c|}{3.42} & 73.64$\pm$0.46 & 6.94  & 3.31  & 78.80$\pm$0.25 & 7.74  &  \multicolumn{1}{c|}{3.92} & 80.05$\pm$0.22 & 9.07  &  4.04 \\ \hline
L2& 56.33$\pm$0.22 & 11.93  & \multicolumn{1}{c|}{5.72} &  60.80$\pm$0.56 & 8.27  & 4.23 & 63.47$\pm$1.35 & 16.27  & \multicolumn{1}{c|}{7.42} &  65.59$\pm$1.03 & 14.80  & 7.00 \\
EWC& 56.55$\pm$0.22  &  10.93 & \multicolumn{1}{c|}{5.53} & 60.88$\pm$0.56  & 7.54  & 4.63 & 64.11$\pm$1.36  & 17.87  & \multicolumn{1}{c|}{6.91} & 66.54$\pm$0.94  & 14.67  & 6.99\\
BN& 57.20$\pm$0.11 &  12.07  & \multicolumn{1}{c|}{5.37} & 61.03$\pm$0.65  & 9.73 & 4.24 & 64.04$\pm$1.15 & 19.07   & \multicolumn{1}{c|}{7.23} & 66.17$\pm$1.75  & 15.33 & 6.92\\
BE& 57.62$\pm$0.41   & 11.80  & \multicolumn{1}{c|}{4.98} & 61.52$\pm$0.68  & 7.00 & 4.09 & 64.70$\pm$1.08   &  23.47 & \multicolumn{1}{c|}{6.97} & 66.77$\pm$1.61  & 16.94 & 7.07\\
APD& \textbf{67.26$\pm$0.38} &  12.00  & \multicolumn{1}{c|}{5.43} & 68.76$\pm$0.58  & 9.86  & 4.33 & 73.88$\pm$0.46 & 8.53   & \multicolumn{1}{c|}{5.04} & 73.85$\pm$0.40  & 11.46  & 5.57 \\
APDfix& 63.59$\pm$0.25   & 5.40   & \multicolumn{1}{c|}{3.68} & 67.69$\pm$0.41  & 6.54  & 3.64 & 67.91$\pm$1.33   & 23.47   & \multicolumn{1}{c|}{8.95} & 58.11$\pm$2.11  & 26.93  & 11.70\\
GPM& 60.84$\pm$0.32  & 11.00  & \multicolumn{1}{c|}{4.44} & 48.09$\pm$1.07  & 9.27 & 4.80 & 70.14$\pm$0.38  & 10.90  & \multicolumn{1}{c|}{4.52} & 43.40$\pm$7.68  & 48.80  & 38.80\\
WSN& 65.73$\pm$0.21  & 5.06  & \multicolumn{1}{c|}{3.31} & 68.27$\pm$0.47  & 8.40  & 4.52 & 73.46$\pm$1.27  & 8.00  & \multicolumn{1}{c|}{4.23} & 75.29$\pm$0.46  & 12.27  & 4.75\\
BMKP& 65.01$\pm$0.41  &  5.87 & \multicolumn{1}{c|}{3.37} & 67.45$\pm$0.59  & 10.60  & 6.69 & 73.57$\pm$0.21  & 9.60  & \multicolumn{1}{c|}{4.31} & 74.84$\pm$0.63  & 12.90  & 4.95\\
CLR& 57.29$\pm$0.39  & 8.80  & \multicolumn{1}{c|}{4.42} & 61.77$\pm$0.47  & 10.27  & 5.42 & 65.22$\pm$0.49  & 9.86  & \multicolumn{1}{c|}{5.74} & 67.59$\pm$0.92  & 16.54  & 9.30\\
PRD& 46.49$\pm$0.30  & 15.40  & \multicolumn{1}{c|}{8.33} & 49.65$\pm$0.57  & 19.33  & 11.59 & 53.54$\pm$0.45  & 19.74  & \multicolumn{1}{c|}{10.02} & 63.78$\pm$0.59  & 20.27  & 7.93\\
\midrule
HALRP& 66.68$\pm$0.20 & 5.60  & \multicolumn{1}{c|}{3.43} & \textbf{70.09$\pm$0.29}  & 4.67  & 3.20 & \textbf{74.01$\pm$0.25} &  7.47 & \multicolumn{1}{c|}{4.12} & \textbf{75.53$\pm$0.50}  & 7.87  & 4.48
\\ \bottomrule
\end{tabular}
}
\caption{Results on TinyImageNet with different backbones.}
\label{Tab.SubTiny}
\end{subtable}

\caption{Performance on P-MNIST, Five-dataset, Omniglot-Rotation and TinyImageNet. We ran experiments with five different task orders generated by different seeds on first two datasets. As for Omniglot-Rotation, we follow~\cite{kang2022forget} to show the scalability under the original sequential order. Acc.$\uparrow$ refers to the empirical accuracy, MOPD$\downarrow$ and AOPD$\downarrow$ refer to the task order robustness as discussed in Section~\ref{Sect:OPD}. Results with $\star$ are from \cite{APD:YoonKYH20}.}
     \label{tab:PMNIST_FIVE_OMNI}
\end{table*}

\begin{table}[t]{}
    \centering
\resizebox{0.8\linewidth}{!}{\Large
\begin{tabular}{c|c|c|c}
\hline
        & \makecell{PMNIST-20\\(MLP)} & \makecell{CIFAR100-20\\(ResNet18)} & \makecell{CIFAR100-20\\(MLP)} \\ \hline
Multitask($\star$) & $96.8$  & $70.2$  & $16.4$ \\ \hline
IRU($\star$)      & $85.60\pm0.15$ & $68.46\pm2.52$ & $65.90\pm2.16$  \\ \hline
Ours            & $92.02\pm0.04$ & $73.71\pm0.87$ & $67.21\pm0.68$  \\ \hline
\end{tabular}
}
\caption{Accuracies under the IRU~\cite{hyder2022incremental} setting. Results with $\star$ are reported in~\cite{hyder2022incremental}.}
\label{tab:res-iru}
\end{table}

\subsection{Robustness on Task Orders}
\label{Sect:OPD}
We evaluate the robustness of these algorithms under different task orders. Following the protocol of~\cite{APD:YoonKYH20}, we assessed the task order robustness with the \textbf{Order-normalized Performance Disparity} (OPD) metric, which is computed as the disparity between the performance $\Bar{P}_t$ of task $t$ under $R$ different task orders: $\mathrm{OPD}_t \triangleq \max \{\Bar{P}_t^1,\dots, \Bar{P}_t^R\} - \min \{\Bar{P}_t^1,\dots, \Bar{P}_t^R\}$.
The maximum OPD (MOPD) and average OPD (AOPD) are defined by 
\begin{equation}
    \begin{aligned}
        \mathrm{MOPD} &\triangleq \max \{\mathrm{OPD}_0,\dots, \mathrm{OPD}_{T-1}\}\\
        \mathrm{AOPD} &\triangleq \frac{1}{T}\sum_{t=0}^{T-1} \mathrm{OPD}_t
    \end{aligned}
\end{equation}
respectively.

Thus, smaller AOPD and MOPD indicate better task-order robustness. For CIFAR100-Split and SuperClass datasets, we adopted different orders by  following~\cite{APD:YoonKYH20}. For the P-MNIST and Five-dataset, we report the averaged results over five different task orders. The results on MOPD and AOPD on P-MNIST, Five-dataset, Omniglot-Rotation, TinyImageNet, and CIFAR100-Splits/-SuperClass are reported in Table~\ref{Tab.SubPMNIST}, Table~\ref{Tab.SubFive}, Table~\ref{Tab.SubOmniglot}, Table~\ref{Tab.SubTiny} and Table~\ref{Tab.CIFAR-OPD}, respectively. According to these results, we can generally conclude that our method can achieve a better accuracy-robustness trade-off, compared to the recent baseline APD.

\begin{table*}[]
\centering
\resizebox{0.9\linewidth}{!}{\Huge
\begin{tabular}{@{}l|cccccccc|cccccccc@{}}
\toprule
\multicolumn{1}{c|}{} & \multicolumn{8}{c|}{CIFAR100-Split (with LeNet)}                                                                       & \multicolumn{8}{c}{CIFAR100-SuperClass (with LeNet)}                                                                  \\
                      & \multicolumn{2}{c}{5\%} & \multicolumn{2}{c}{25\%} & \multicolumn{2}{c}{50\%} & \multicolumn{2}{c|}{100\%} & \multicolumn{2}{c}{5\%} & \multicolumn{2}{c}{25\%} & \multicolumn{2}{c}{50\%} & \multicolumn{2}{c}{100\%} \\ \midrule
Method                & MOPD$\downarrow$        & AOPD$\downarrow$      & MOPD$\downarrow$        & AOPD$\downarrow$       & MOPD$\downarrow$        & AOPD$\downarrow$       & MOPD$\downarrow$         & AOPD$\downarrow$        & MOPD$\downarrow$       & AOPD$\downarrow$       & MOPD$\downarrow$        & AOPD$\downarrow$       & MOPD$\downarrow$        & AOPD$\downarrow$       & MOPD$\downarrow$         & AOPD$\downarrow$       \\ \midrule
STL                   & 1.96        & 1.38      & 3.44        & 2.41       & 3.68        & 2.57       & 4.38         & 3.13        & 2.52       & 1.48       & 3.64        & 1.44       & 2.52        & 1.48       & 2.64         & 1.45       \\
MTL                   & 1.44        & 0.93      & 1.34        & 0.87       & 1.66        & 0.71       & 1.54         & 0.84        & 6.96       & 2.66       & 12.04       & 4.89       & 13.96       & 5.74       & 13.48        & 6.29       \\ \hline
L2                    & 11.66       & 6.13      & 13.14       & 6.96       & 15.02       & 7.50       & 15.18        & 7.15        & 9.92       & 4.55       & 13.24       & 6.43       & 13.32       & 7.04       & 14.44        & 6.82       \\
EWC                   & 11.92       & 6.07      & 13.20       & 6.85       & 13.78       & 6.96       & 13.44        & 6.48        & 13.24      & 5.60       & 10.92       & 6.53       & 13.48       & 8.03       & 21.60        & 8.32       \\
BN                    & 11.70       & 5.95      & 13.28       & 7.37       & 12.40       & 6.91       & 13.92        & 7.90        & 8.52       & 3.35       & 11.20       & 3.99       & 11.64       & 4.05       & 11.40        & 4.25       \\
BE                    & 12.12       & 5.47      & 12.92       & 6.13       & 9.28        & 5.56       & 8.50         & 5.35        & 9.56       & 3.46       & 11.88       & 4.00       & 11.80       & 3.88       & 10.84        & 4.03       \\
APD                   & 11.42       & 7.21      & 6.48        & 3.77       & 8.00        & 4.10       & 8.88         & 4.07        & 26.56      & 12.98      & 8.64        & 4.39       & 8.28        & 4.48       & 6.72         & 3.26       \\
APDfix                & 6.64        & 4.39      & 8.32        & 4.94       & 8.92        & 5.54       & 7.40         & 4.21        & 9.04       & 4.90       & 10.28       & 5.68       & 8.56        & 5.32       & 6.04         & 2.70       \\
IBPWF                 &   4.30          &  2.84         &  4.60           &  2.73          & 5.40            & 3.07           &  3.68             &  2.68           &  14.84          & 7.45           &   4.44          &  2.45          &  5.36           &  2.86          &   5.52           &  3.38          \\
GPM  &    11.14     &    6.37       &   6.32       &   4.22     &    6.32     &  3.69          &   2.28            &      1.348     & 9.32         &  4.46       &   10.00       & 5.53     & 8.84          &  5.89        &   7.68   & 4.47     \\
WSN                   & 4.16        & 2.57      & 4.42        & 2.71       & 4.2         & 2.62       & 3.56         & 2.39        & 5.08       & 3.14       & 4.76        & 3.192      & 4.00        & 2.16       & 3.76         & 2.36       \\ 
BMKP                   & 12.98        & 7.63     &13.22         &6.50        & 6.52          &4.30        &8.34          & 3.35        & 11.72       & 6.03       & 13.32        & 5.27      & 11.08        & 4.43        & 3.72         & 1.99       \\
CLR                   & 13.50        & 7.14     &12.77         &6.13        & 8.90          &5.82        &9.37          & 5.34        & 10.00       & 4.01       & 7.67        & 3.84      & 6.60        & 3.71        & 9.00         & 4.06       \\
PRD       & 5.80        & 3.80     & 4.60      & 2.66       & 4.16       & 2.63       &  5.20       & 3.16      &  7.47     &  4.28    &  8.33     & 3.95     &  5.33      & 3.29       &   5.00      &  2.86     \\
\midrule
HALRP                & 2.56        & 1.34      & 2.58        & 1.44       & 2.34        & 1.71       & 3.90         & 2.56        & 2.96       & 1.65       & 3.40        & 1.91       & 4.48        & 1.65       & 4.34         & 1.96       \\ \bottomrule
\end{tabular}
}
\caption{Task order robustness evaluation on CIFAR100-Split/SuperClass with different amounts of training data.}
\label{Tab.CIFAR-OPD}
\end{table*}

\subsection{Handling the Catastrophic Forgetting}
We then evaluated the abilities of different methods for overcoming catastrophic forgetting. We illustrate the average forgetting (BWT) on the CIFAR100 Split/SuperClass dataset under different amounts of data in Fig.~\ref{fig:comparison_forgetting}. Our proposed method can effectively address the forgetting issue, comparable with some state-of-the-art methods like WSN. We further provide detailed forgetting statistics within five different task orders in Appendix E.2.

\begin{figure}
    \centering
     \begin{subfigure}[b]{0.9\linewidth}
         \centering
         \includegraphics[width=\linewidth]{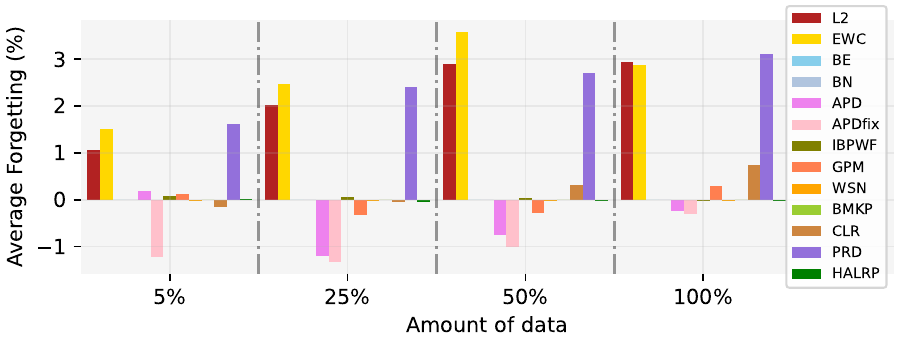}
         \caption{Forgetting on 10 tasks in CIFAR100-Splits}
         \label{fig:forgetting_c100splits}
     \end{subfigure}
    \centering
     \begin{subfigure}[b]{0.9\linewidth}
         \centering
         \includegraphics[width=\linewidth]{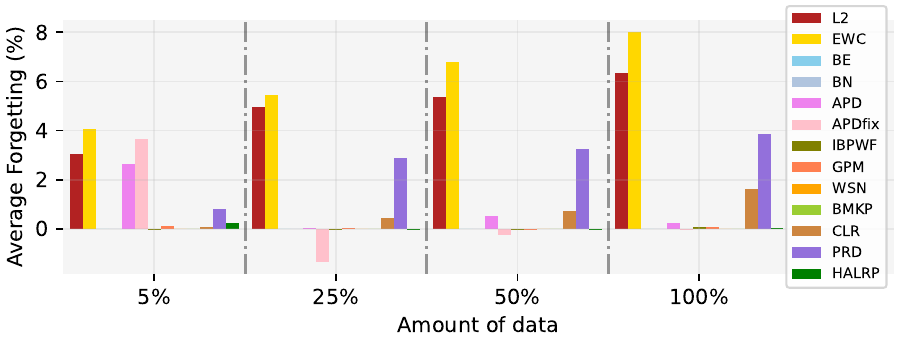}
         \caption{Forgetting on 20 tasks in CIFAR100-SuperClass}
         \label{fig:forgetting_c100super}
     \end{subfigure}
    \centering
     \begin{subfigure}[b]{0.9\linewidth}
         \centering
         \includegraphics[width=\linewidth]{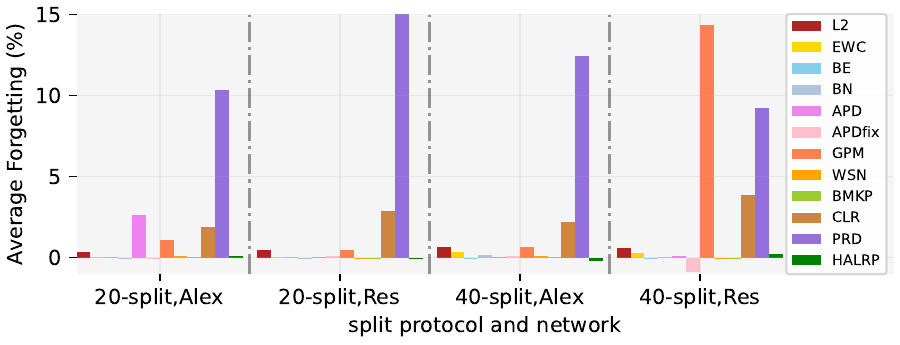}
         \caption{Forgetting on TinyImageNet. ``Alex'' for AlexNet and ``Res'' for ResNet18 }
         \label{fig:forgetting_tinyimagenet}
     \end{subfigure}
    \caption{Average Forgetting Statistics}
    \label{fig:comparison_forgetting}
\end{figure}

\subsection{Model Increment Analysis}
Another potential issue in continual learning is the model size growth as the number of tasks increases. To compare increased model capacities among different methods, we visualize \emph{the ratio of increased parameters} w.r.t to base model on the sequential 20 tasks of CIFAR100-SuperClass dataset in Fig.~\ref{fig:model_incrase}. We can observe that our proposed HALRP can better control the model capacity increment. After the first four tasks, the model parameters increased by around $19\%$ and grow slowly during the last sixteen tasks (finally $\leq28\%$). In contrast, the model parameters of APD grew quickly by around $40\%$ in the first two or three tasks and kept a high ratio during the following tasks. 
\begin{figure}
    \centering
     \begin{subfigure}[b]{0.48\linewidth}
         \centering
         \includegraphics[width=\linewidth]{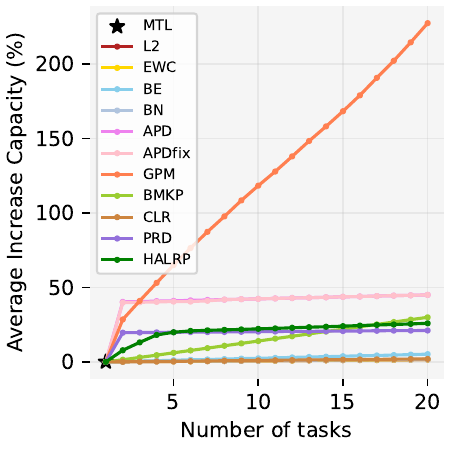}
         \caption{}
         \label{fig:model_incrase}
     \end{subfigure}
    \centering
     \begin{subfigure}[b]{0.48\linewidth}
         \centering
         \includegraphics[width=\linewidth]{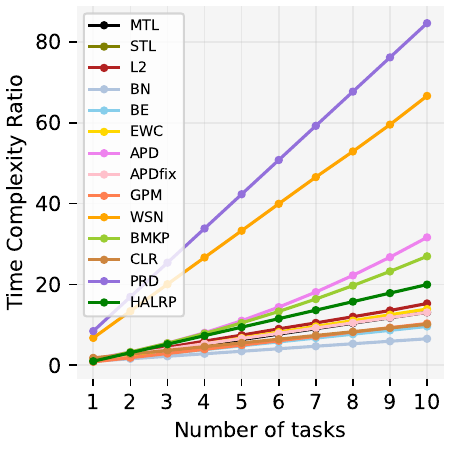}
         \caption{}
         \label{fig:time_complexity}
     \end{subfigure}
    \caption{(a) Average Capacity Increment ratio on CIFAR100-SuperClass w.r.t. the base model. (b) Average Time Complexity Ratio on PMNIST.}
    \label{fig:forgetting_time}
\end{figure}

\subsection{Computational Efficiency}
We visualize the average time complexity ratio on the ten tasks of the PMNIST dataset in Fig.~\ref{fig:time_complexity}. The time complexity ratio is computed by dividing the accumulated time across the tasks w.r.t. the time cost on the first task of single-task learning. We observe that our proposed method is also computationally efficient with limited time consumption compared to most baselines, with a better trade-off between performance and efficiency. Especially, our method needs less training time compared to WSN, BMKP and APD. 

The major reason that should account for the inefficiency of WSN is that this method needs to optimize a weight score for each model parameter and then generate binary masks by locating a certain sparsity quantile among the weight scores in each layer, which is very time-consuming.
Especially, we can observe a non-linear increase in behavior for BMKP and APD in Fig.~\ref{fig:time_complexity}. In BMKP, the pattern basis of the core knowledge space increases along the sequential tasks and then the knowledge projection between two memory levels will cost more time as the tasks pass. 
A severe drawback of APD method is that when the task $t$ arrives for learning, it needs to recover the parameters $\theta^{\star}_i$ for all previous tasks $i=0,1,...,t-1$ and then apply the regularization $\Sigma_{i=0}^{t-1}\|\theta_t - \theta^{\star}_i\|^2_2$ between the weights of current task (i.e., $\theta_t$) and each previous task (i.e., $\theta^{\star}_i$). This defect becomes more time-consuming as more tasks arrive. Besides, an extra $k$-means clustering process is needed for the hierarchical knowledge consolidation among tasks in APD. The difference of computational efficiency becomes more significant on the challenging scenarios with more tasks and complex network architectures. As for PRD, it is inefficient as this method converges slower than other methods, due to the reason that it adopted supervised contrastive loss instead of cross-entropy loss.

Apart from the time efficiency, we also tracked the GPU memory usage during the training. The results showed that our method only requires limited GPU memory for training compared to other state-of-the-art methods. See Appendix E.3 for more details.

\subsection{Ablation Studies}
In this part, we provide ablation studies about some hyperparameters in our method. 

\textbf{Ablation study about the rank selection}
We report the additional studies to evaluate the impacts of hyperparameter selection on the rank selection process.
We conduct the following ablations: (1) varying the warm-up epochs $n_r$, (2) varying the loss approximation rate $\alpha$, (3) \textbf{LRP}: omitting the importance estimation (step 6 of Algorithm~\ref{Alg.HALRP}), and (4) \textbf{Random LRP}: replacing step 6 of Algorithm~\ref{Alg.HALRP} with a random decomposition. 
The relevant results on CIFAR100-Splits are depicted in Table~\ref{Tab.Ablation}.
We can observe that the performance will not significantly rise as $n_r$ increases, which indicates that only one epoch is enough for warm-up training. When decreasing the approximate rate~$\alpha$, the accuracy will decay. Furthermore, if we apply the random decomposition of the model, the accuracy will sharply drop, indicating the necessity of our Hessian aware-decomposition procedure. 

\textbf{Effect of the regularization coefficients $\lambda_0$ and $\lambda_1$} To investigate the effects of the regularization coefficients introduced in Eq.~\ref{Eq:loss_reg}, we further conducted experiments on two datasets: CIFAR100-SuperClass and CIFAR100-Splits. We adopted values from $\{0, 1e-6, 1e-5, 1e-4, 1e-3\}$ for $\lambda_0$ and $\{5e-4, 1e-4, 5e-5\}$ for $\lambda_1$. For each combination, we reported MOPD/AOPD for task order robustness as well as the average accuracy on the above two datasets, and the results are illustrated in Fig.~\ref{fig:ablation-lambda}. We can see that the regularization coefficients $\lambda_0$ and $\lambda_1$ have potential effects on the final accuracy and task order robustness. In our experiments, we choose the optimal hyperparameters by validation. Furthermore, some selected hyperparameters are also applied to other baselines to realize fair comparisons, e.g., we set $L_2$-regularizer coefficient $\lambda_1=1e-4$ for all other methods on CIFAR100-SuperClass and CIFAR100-Splits. See Appendix F.3 for more details.

\begin{table}[]
\centering
\resizebox{0.5\linewidth}{!}{\Huge
\begin{tabular}{@{}l|cc@{}}
\toprule
Ablation                                        & Acc.$\uparrow$          &  Size$\downarrow$  \\ \midrule
{$\alpha$: 0.9, $n_r$: 1}  & $68.39\pm0.13$ & 0.234 \\
{$\alpha$: 0.9, $n_r$: 2}  & $67.93\pm0.19$ & 0.234 \\
{$\alpha$: 0.9, $n_r$: 3}  & $67.38\pm0.12$ & 0.234 \\
{$\alpha$: 0.9, $n_r$: 4}  & $67.04\pm0.08$ & 0.234 \\
{$\alpha$: 0.95, $n_r$: 1} & $68.09\pm0.15$ & 0.234 \\
{$\alpha$: 0.75, $n_r$: 1} & $68.26\pm0.11$ & 0.217 \\
{$\alpha$: 0.60, $n_r$: 1} & $66.87\pm0.23$ & 0.165 \\
{$\alpha$: 0.45, $n_r$: 1} & $66.08\pm0.14$ & 0.100 \\
\midrule
 LRP. & $67.49\pm$ 0.13 & 0.234  \\
Random LRP.      & $61.76\pm0.18$ & 0.235 \\ \bottomrule
\end{tabular}}
\caption{Ablation studies. Acc.$\uparrow$ refers to the accuracy; Size refers to the relative increment size.}
\label{Tab.Ablation}
\end{table}

\begin{figure}[t]
    \centering
     \begin{subfigure}[b]{\linewidth}
         \centering
         \includegraphics[width=\linewidth]{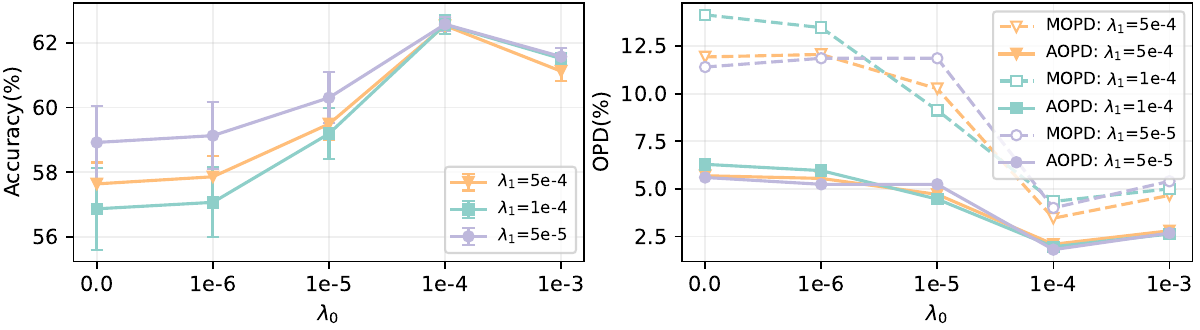}
         \caption{Accuracy$\uparrow$ (left) and MOPD$\downarrow$/AOPD$\downarrow$ (right) on CIFAR100-SuperClass (20 tasks)}
         \label{fig:ablation-lambda-c100super}
     \end{subfigure}
         \centering
     \begin{subfigure}[b]{\linewidth}
         \centering
         \includegraphics[width=\linewidth]{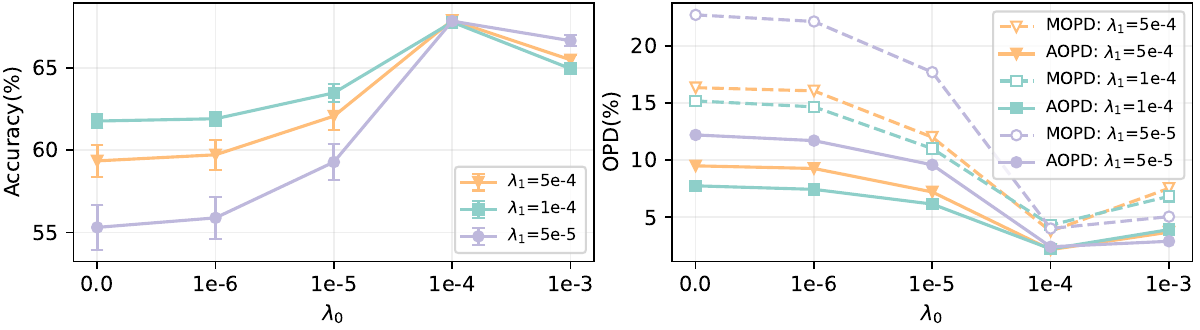}
        \caption{Accuracy$\uparrow$ (left) and MOPD$\downarrow$/AOPD$\downarrow$ (right) on CIFAR100-Splits (10 tasks)}
         \label{fig:ablation-lambda-c100splits}
     \end{subfigure}
    \caption{Effect of regularization coefficients $\lambda_0$ and $\lambda_1$.}
    \label{fig:ablation-lambda}
\end{figure}

\section{Conclusion}
In this work, we propose a low-rank perturbation method for continual learning. Specifically, we approximate the task-adaptive parameters with low-rank decomposition by formulating the model transition along the sequential tasks with parameter transformations. We theoretically show the quantitative relationship between the Hessian and the proposed low-rank approximation, which leads to a novel Hessian-aware framework that enables the model to automatically select ranks by the relevant importance of the perturbation to the model's performance. The extensive experimental results show that our proposed method performs better on the robustness of different task orders and the ability to address catastrophic forgetting issues.

\section*{Acknowledgments}
 The work of Fan Zhou, Rui Wang and Shichun Yang is supported by the National Natural Science Foundation of China, Young Scientist Fund (52302487). Jiaqi Li, Yuanhao Lai, Sabyasachi Sahoo, Charles X. Ling, Boyu Wang and Christian Gagn\'e are supported in part by the Science and the Natural Sciences and Engineering Research Council of Canada (NSERC), in part by the NSERC Discovery Grants Program.

\bibliographystyle{IEEEtran}
\bibliography{references}
\clearpage
\appendices

\section{Background on Low-Rank Factorization for Matrix}
\label{Sect.BackgroundSVD}

Here, we provide more background knowledge about the low-rank matrix factorization. 

In this work, we leverage the Eckart–Young–Mirsky theorem~\cite{eckart1936approximation} with the Frobenius norm. In this part, we provide background knowledge for the self-cohesion of the paper.

Denote by ${\B \in \mathbb {R} ^{m\times n}}$ a real (possibly rectangular) matrix. Suppose that $\B =\U\bSigma \V^{\top }$ is the singular value decomposition (SVD) of $\B$, then, we can claim that the best rank $k$ approximation ($k\leq \min\{m,n\}$) to $\B$ under the Frobenius norm $\|\cdot \|_{F}$ is given by
${\B_{k}=\sum _{i=1}^{k}\sigma _{i} \mathbf{u}_{i}\mathbf{v}_{i}^{\top }}$, where  $\mathbf{u}_{i}$ and $\mathbf{v}_{i}$ denote the $i^{th}$ column of $\U$ and $\V$, respectively. Then,

$$\|\B-\B_{k}\|_{F}^{2}=\Big\|\sum _{i=k+1}^{n}\sigma _{i}\mathbf{u}_{i}\mathbf{v}_{i}^{\top }\Big\|_{F}^{2}=\sum _{i=k+1}^{n}\sigma _{i}^{2}$$

Thus, we need to show that if $\A_{k} =XY^{\top}$ where $X$ and $Y$ has $k$ columns

$$\|\B-\B_{k}\|_{F}^{2}=\sum _{i=k+1}^{n}\sigma_{i}^{2}\leq \|\B-\A_{k}\|_{F}^{2}$$.

By the triangle inequality, if $\B=\B'+\B''$ then $\sigma _{1}(\B)\leq \sigma _{1}(\B')+\sigma _{1}(\B'')$. Denote by $\B'_{k}$ and $\B''_{k}$ the rank $k$ approximation to $\B'$ and $\B''$ by the SVD method, respectively. Then, for any $i,j\geq 1$,

\begin{equation*}
	\begin{split}
		\sigma _{i}(\B')+\sigma _{j}(\B'')&=\sigma _{1}(\B'-\B'_{i-1})+\sigma _{1}(\B''-\B''_{j-1})\\
		&\geq \sigma _{1}(\B-\B'_{i-1}-\B''_{j-1})\\&\geq \sigma _{1}(\B-\B_{i+j-2})\qquad \\&=\sigma _{i+j-1}(\B)
	\end{split}
\end{equation*}
where the last inequality comes from the fact that ${\rm {rank}}(\B'_{i-1}+\B''_{j-1})\leq {\rm {rank\,}}(\B_{i+j-2})$. 

Since $\sigma _{k+1}(\B_{k})=0$, when $\B'=\B-\A_{k}$ and $\B''= \A_{k}$ we conclude that for $ i\geq 1,j=k+1$

$$\sigma_{i}(\B-\A_{k})\geq \sigma_{k+i}(\B)$$
Therefore,
\begin{equation*}
	\begin{split}
		\|\B-\A_{k}\|_{F}^{2}=&\sum _{i=1}^{n}\sigma _{i}(\B-\A_{k})^{2}\geq  \sum _{i=k+1}^{n}\sigma _{i}(\B)^{2}=\|\B-\B_{k}\|_{F}^{2}
	\end{split}
\end{equation*}
Thus, we can get Eq.~1 displayed in the paper.

\section{Proof to Theorem 1 and Discussion}
\label{sec.proofTheorem}

\begin{theoremnew}
	Assume that a neural network of $L$ layers with vectorized weights 
	$(\bomega^{\star}_1, \dots, \bomega^{\star}_L)$ that have converged to local optima,
	such that the first and second order
	optimality conditions are satisfied, i.e., the gradient is zero, and the Hessian is positive semi-definite.
	Suppose a perturbation $\Delta \bomega^{\star}_1$ applied to the first layer
	weights, then we have the loss change
	\begin{equation}
		\begin{split}
			|\mathcal{L}(\bomega^{\star}_1 -\Delta \bomega^{\star}_1,& \dots, \bomega^{\star}_L) - 
			\mathcal{L}(\bomega^{\star}_1, \dots, \bomega^{\star}_L)| \\ 
			&\leq \frac{1}{2}\|\bH_1\|_F \cdot \|\Delta \bomega^{\star}_1\|^2_F
			+o(\|\Delta \bomega^{\star}_1\|^2_F),
		\end{split}
	\end{equation}
	where $\bH_1=\nabla^2\mathcal{L}(\bomega^{\star}_1)$ is the Hessian matrix at only the variables of the first layer weights.
\end{theoremnew}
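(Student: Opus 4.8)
The plan is to obtain the bound directly from a second-order Taylor expansion of the loss at the converged optimum. Fixing all layers except the first at their optimal values, I would regard the loss as a function of the first-layer weights alone, $f(\bomega_1) := \mathcal{L}(\bomega_1, \bomega^{\star}_2, \dots, \bomega^{\star}_L)$, and expand around $\bomega^{\star}_1$ in the direction $-\Delta\bomega^{\star}_1$:
\begin{equation}
\begin{split}
f(\bomega^{\star}_1 - \Delta\bomega^{\star}_1) - f(\bomega^{\star}_1) = &-\langle \bg_1, \Delta\bomega^{\star}_1\rangle \\ &+ \tfrac{1}{2}(\Delta\bomega^{\star}_1)^\top \bH_1\, \Delta\bomega^{\star}_1 + o\!\left(\|\Delta\bomega^{\star}_1\|_F^2\right),
\end{split}
\end{equation}
where $\bg_1$ and $\bH_1$ are the gradient and Hessian of $f$ at $\bomega^{\star}_1$.

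The first move is to eliminate the linear term. Since $(\bomega^{\star}_1,\dots,\bomega^{\star}_L)$ satisfies the first-order optimality condition, the full gradient is zero, hence so is its restriction $\bg_1$ to the first-layer coordinates; the inner product $\langle \bg_1, \Delta\bomega^{\star}_1\rangle$ therefore vanishes, and only the quadratic term together with the $o(\|\Delta\bomega^{\star}_1\|_F^2)$ remainder survive.

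The key step is then to control the quadratic form by the Frobenius norm of the Hessian. I would observe that the scalar $(\Delta\bomega^{\star}_1)^\top \bH_1\, \Delta\bomega^{\star}_1$ equals the Frobenius inner product $\langle \bH_1,\, \Delta\bomega^{\star}_1 (\Delta\bomega^{\star}_1)^\top\rangle$, and then apply the Cauchy--Schwarz inequality together with the identity $\|\mathbf{v}\mathbf{v}^\top\|_F = \|\mathbf{v}\|_F^2$ for a rank-one outer product, which yields $|(\Delta\bomega^{\star}_1)^\top \bH_1\, \Delta\bomega^{\star}_1| \le \|\bH_1\|_F \,\|\Delta\bomega^{\star}_1\|_F^2$. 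Taking absolute values of the reduced expansion and substituting this bound produces the claimed inequality.

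This argument is essentially routine, so I do not anticipate a substantive obstacle; the only delicate points are bookkeeping ones. One must ensure the optimality condition is imposed on the whole network, so that the partial gradient with respect to the first layer is genuinely zero, and one must accept the (looser but notationally convenient) Frobenius-norm bound on the quadratic form in place of the tighter spectral-norm estimate $|\mathbf{x}^\top \M \mathbf{x}| \le \|\M\|_2 \|\mathbf{x}\|^2$. Notably, the positive-semidefiniteness hypothesis is not needed for the inequality itself; it serves only to guarantee that the leading quadratic term is nonnegative, so that the stated upper bound also reflects the sign of the leading-order loss change.
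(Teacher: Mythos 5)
Your proposal is correct and follows essentially the same route as the paper: a second-order Taylor expansion, cancellation of the linear term via the first-order optimality condition, and the bound $|(\Delta\bomega^{\star}_1)^\top \bH_1 \Delta\bomega^{\star}_1| \le \|\bH_1\|_F\,\|\Delta\bomega^{\star}_1\|_F^2$ (the paper justifies this last step by sub-multiplicativity of the Frobenius norm rather than Cauchy--Schwarz on the trace inner product, but the two are interchangeable here). Your closing observation that positive semi-definiteness is not actually used in the inequality also matches the paper's argument, which likewise never invokes it.
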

\begin{proof}
	Denote the gradient and Hessian of the first layer $\bomega^{\star}_1$ as $\bg_1$ and $\bH_1$.
	Through Taylor’s expansion, we have
	\begin{equation*}
		\begin{split}
			\mathcal{L}(\bomega^{\star}_1 & -\Delta \bomega^{\star}_1, \dots, \bomega^{\star}_L) - 
			\mathcal{L}(\bomega^{\star}_1, \dots, \bomega^{\star}_L)\\  
			& = -\bg_1^T \Delta \bomega^{\star}_1 +
			\frac{1}{2}\Delta\bomega^{\star T}_1 \bH_1 \Delta\bomega^{\star}_1
			+o(\|\Delta \bomega^{\star}_1\|^2_F).
		\end{split}
	\end{equation*}
	Using the fact that the gradient is zero at the local optimum $\bomega^{\star}_1$  as well as the sub-additive and sub-multiplicative properties of Frobenius norm, 
	we have,
	\begin{equation*}
		\begin{split}
			|\mathcal{L}(\bomega^{\star}_1 & -\Delta \bomega^{\star}_1, \dots, \bomega^{\star}_L) - 
			\mathcal{L}(\bomega^{\star}_1, \dots, \bomega^{\star}_L)|\\  
			& = |\frac{1}{2}\Delta\bomega^{\star T}_1 \bH_1 \Delta\bomega^{\star}_1 +o(\|\Delta \bomega^{\star}_1\|^2_F) |\\
			& \leq \|\frac{1}{2}\Delta\bomega^{\star T}_1 \bH_1 \Delta\bomega^{\star}_1\|_F
			+ o(\|\Delta \bomega^{\star}_1\|^2_F) \\
			& \leq \frac{1}{2}\|\bH_1\|_F \cdot \|\Delta\bomega^{\star}_1\|^2_F
			+ o(\|\Delta \bomega^{\star}_1\|^2_F).
		\end{split}
	\end{equation*}
	
	Thus, we conclude the proof. 
	
\end{proof}

Algorithm 1 implies that the Hessian information
can be used to quantitatively measure the influences of
low-rank perturbation on the model’s empirical losses. In practice, we can approximate the Hessian by
the negative empirical Fisher information~\cite{kunstner2019limitations}. This enables a dynamic scheme for the trade-off between the approximation error and computational efficiency. For a given loss approximation rate, the model can
automatically select the rank for all the layers.

\section{Solving $\R^{\text{free}}, \boldS^{\text{free}}, \B^{\text{free}}$}
\label{Sect:appendix-RSB}
In Section 4.1, we described that when a new task comes, we first train the model on it for several epochs. With this step, we can obtain a rough parameter estimation $\W^{\text{free}}$ of this new task, which indicates the freely-trained weights without any constraints. Then, we can obtain $\R^{\text{free}}, \boldS^{\text{free}}, \B^{\text{free}}$ by solving the following problem:
\begin{align}
	&\R^{\text{free}} = \argmin_{\R} \|\W^{\text{free}} -  \R \W^{\text{base}}\|_{F}^2 \\
	&\boldS^{\text{free}} = \argmin_{\boldS}  \|\W^{\text{free}}  -  \R^{\text{free}} \W^{\text{base}} \boldS \|_{F}^2\\
	&\B^{\text{free}} = \W^{\text{free}} - \R^{\text{free}} \W^{\text{base}} \boldS^{\text{free}}
\end{align}
Simply, $\|\W^{\text{free}} -  \R \W^{\text{base}}\|_{F}^2= \sum_{i=1}^I \sum_{j=1}^J (w^{\text{free}}_{ji} -r_jw^{\text{base}}_{ji})^2$, where $w^{\text{free}}_{ji}$ and $w^{\text{base}}_{ji}$ are the elements of $j$-th row and $i$-th column of {\small$\W^{\text{free}}$} and {\small$\W^{\text{base}}$}, respectively. By taking the derivative of the above expression w.r.t. $r_j$ and let it equals $0$, we can obtain:
\begin{equation}
	r_j^{\text{free}} = \frac{\sum^{I}_{i=1} w_{ji}^{\text{free}}w_{ji}^{\text{base}}}{\sum^{I}_{i=1} (w_{ji}^{\text{base}})^2} 
\end{equation}
Similarly, $\|\W^{\text{free}}  -  \R^{\text{free}} \W^{\text{base}} \boldS \|_{F}^2=\sum^{I}_{i=1}\sum^{J}_{j=1}( w_{ji}^{\text{free}} - r_{j}^{\text{free}} w_{ji}^{\text{base}} s_{i} )^2$. By taking the derivative w.r.t. $s_i$ and let it equals $0$, we can obtain:
\begin{equation}
	s_i^{\text{free}} = \frac{\sum^{J}_{j=1} r_{j}^{\text{free}} w_{ji}^{\text{free}}w_{ji}^{\text{base}}}{\sum^{J}_{j=1} (r_{j}^{\text{free}}w_{ji}^{\text{base}})^2}
\end{equation}
Then, we can calculate $\B^{\text{free}}$ by the third equation using the obtained $\R^{\text{free}}$ and $\boldS^{\text{free}}$. 

Finally, the low rank approximation can be applied on $\W^{\text{free}}$ with \texttt{SVD}:
\begin{equation}
	\W^{{\text{free}}} \approx \W^{(k){\text{free}}} = \R^{\text{free}} \W^{\text{base}} \boldS^{\text{free}} +
	\B^{(k){\text{free}}},
\end{equation}
where {\small$\B^{(k){\text{free}}} = \U^{(k){\text{free}}} \bSigma^{(k){\text{free}}} (\V^{(k){\text{free}}})^\top $}.

Note that the number of parameters in the original weights $\W^{\text{free}} \in \mathbb{R}^{J\times I}$ is $\text{size}\{\W^{\text{free}}\}=JI$. After this approximation, we only need to store the weights $\R^{\text{free}}$, $\boldS^{\text{free}}$, $\U^{(k){\text{free}}}$, $\bSigma^{(k){\text{free}}}$, $\V^{(k){\text{free}}}$ for a new task, whose parameter number is $\text{size}\{\R^{\text{free}}, \boldS^{\text{free}}, \U^{(k){\text{free}}}, \bSigma^{(k){\text{free}}}, \V^{(k){\text{free}}}\}=J+I+kJ+k+kI=(J+I)(k+1)+k$. Thus, the incremental ratio is $\rho = \frac{(J+I)(k+1)+k}{JI} \ll 1$ in practice.

\section{Discussion on the Fine-tuning Objective on the New Task}
\label{Sect:discussionRegularization}
In the proposed algorithm, we finally fine-tune the model on the new task with Eq.~12 in the manuscript, which is re-illustrated below,
\begin{equation}
	\min_{\W_t} \calL(\W_t; \mathcal{D}_t) + \calL_{\text{reg}}(\W_t)
\end{equation}
where 
\begin{equation}
	\begin{aligned}
		&\calL_{\text{reg}}(\W_t) =\sum_{l}\Big[
		\lambda_0 \underbrace{(  \| \U_{i}^{(k_l) \text{free}}  \| + \| \V_{i}^{(k_l) \text{free}}   \| )}_{L_{1}-\mathrm{regularization}} \\ 
		&+ \lambda_1 \underbrace{( \| \R_i^{\text{free}} \|_2^2 +  \| \boldS_i^{\text{free}} \|_2^2 + \| \U_{i}^{(k_l) \text{free}} \|_2^2 + \|\V_{i}^{(k_l) \text{free}} \|_2^2)}_{L_{2}-\mathrm{ regularization}}
		\Big]
	\end{aligned}
\end{equation}

The fine-tuning objective mainly consists of three parts, the general cross-entropy loss on the new task $\mathcal{T}_t$, a $L_1$ regularization term on $\| \U_{i}^{(k_l) \text{free}}  \| + \| \V_{i}^{(k_l) \text{free}}  \|$, and the $L_2$ regularization terms on $\| \R_i^{\text{free}} \|_2^2 +  \| \boldS_i^{\text{free}} \|_2^2$ and $\| \U_{i}^{(k_l) \text{free}} \|_2^2 + \|\V_{i}^{(k_l) \text{free}} \|_2^2$, respectively. 

As mentioned in the paper, we apply the $L_1$ regularization on $\U_{i}^{(k_l) \text{free}}$ and $ \V_{i}^{(k_l) \text{free}}$. Besides, as discussed in Section 4.4 of the paper, we also apply to prune $\U_{i}^{\text{free}}$ and $ \V_{i}^{(k_l) \text{free}}$, which further encouraged the sparsity.


Our method keeps $\mathbf{W}^{\text{base}}$ as unchanged for new tasks for knowledge transfer, while the $\R_i^{\text{free}}$, $\boldS_i^{\text{free}}$, $\U_{i}^{(k) \text{free}}$ and $\V_{i}^{(k)\text{free}}$ are left as task-adaptive parameters. Since $\R_i^{\text{free}}$ and $\boldS_i^{\text{free}}$ are diagonal matrices, plus the fact that $\U_{i}^{(k) \text{free}}$ and $ \V_{i}^{(k) \text{free}}$ are sparse, thus the model only needs to learn a small number of parameters. Unlike some regularization methods (\emph{e.g.}~EWC~\cite{kirkpatrick2017overcoming}), which constrain the gradient update and require to re-train a lot of parameters, our method can update the model more efficiently. In addition, the empirical results reported in the paper show that our method can also achieve better performance with less time and memory cost. 

\section{Additional Experimental Results}
\label{Sect.AdditionalExp}

\begin{table}[t]
	\centering
	\resizebox{0.65\linewidth}{!}{
		\begin{tabular}{@{}l|ccc@{}}
			\toprule
			& \multicolumn{3}{c}{P-MNIST}      \\ \cmidrule(l){2-4} 
			& Acc.$\uparrow$              & MOPD$\downarrow$  & AOPD$\downarrow$ \\ \cmidrule(l){2-4} 
			IBWPF & 78.12 $\pm$ 0.83  & 12.69 & 6.65 \\
			HALRP & 98.10 $\pm$ 0.03  & 0.47  & 0.24 \\ \midrule
			& \multicolumn{3}{c}{Five-dataset} \\ \cmidrule(l){2-4} 
			& Acc.$\uparrow$              & MOPD$\downarrow$  & AOPD$\downarrow$ \\ \cmidrule(l){2-4} 
			IBWPF & 84.62 $\pm$ 0.36  & 5.06  & 1.72 \\
			HALRP & 88.81 $\pm$ 0.31  & 4.28  & 1.31 \\ \bottomrule
	\end{tabular}}
	\caption{Comparison with low-rank factorization method}
	\label{Tab:ComparsionWithIBPWF}
\end{table}

\begin{figure*}[t]
	\centering
	\begin{subfigure}[b]{0.24\linewidth}
		\centering
		\includegraphics[width=\linewidth]{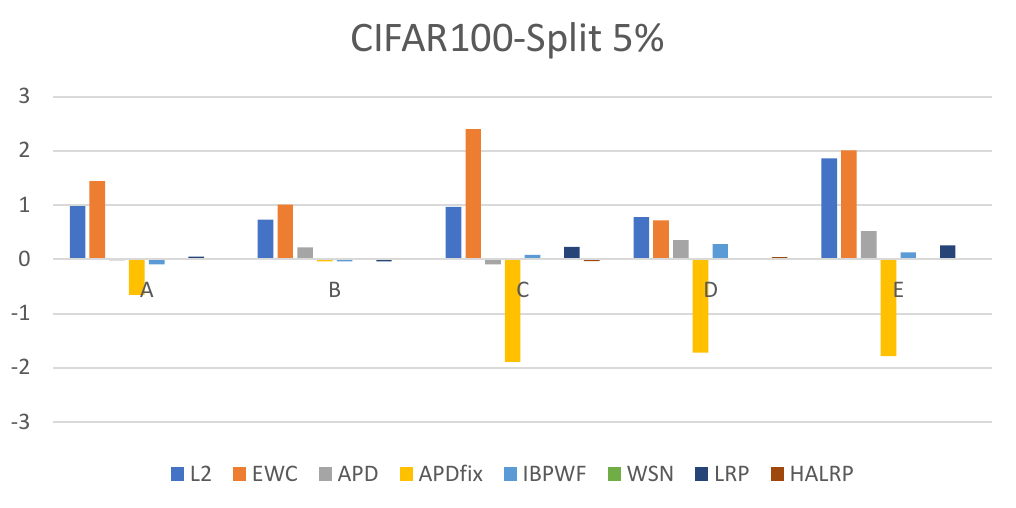}
		\caption{5\% of training data}
		\label{fig:forgetting_split5}
	\end{subfigure}
	\centering
	\begin{subfigure}[b]{0.24\linewidth}
		\centering
		\includegraphics[width=\linewidth]{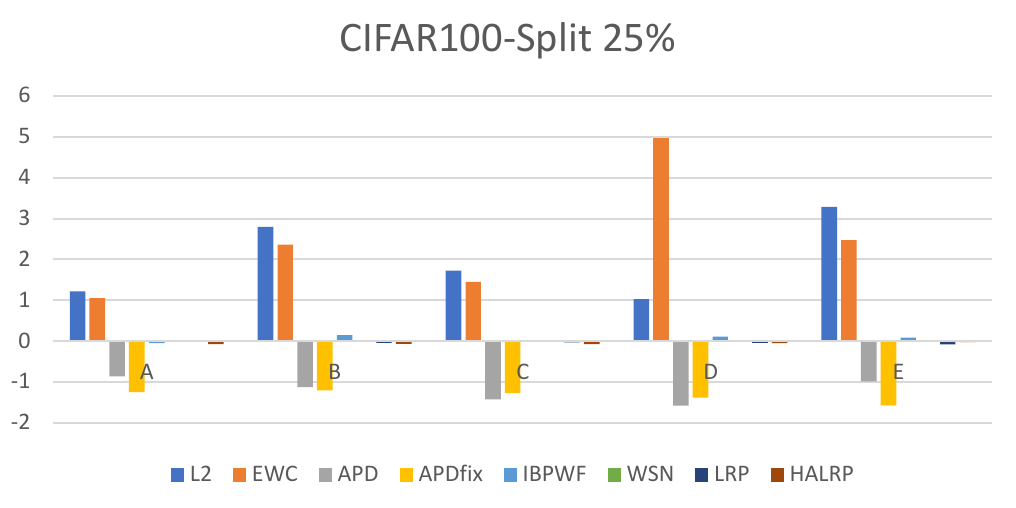}
		\caption{25\% of training data}
		\label{fig:forgetting_split25}
	\end{subfigure}
	\begin{subfigure}[b]{0.24\linewidth}
		\centering
		\includegraphics[width=\linewidth]{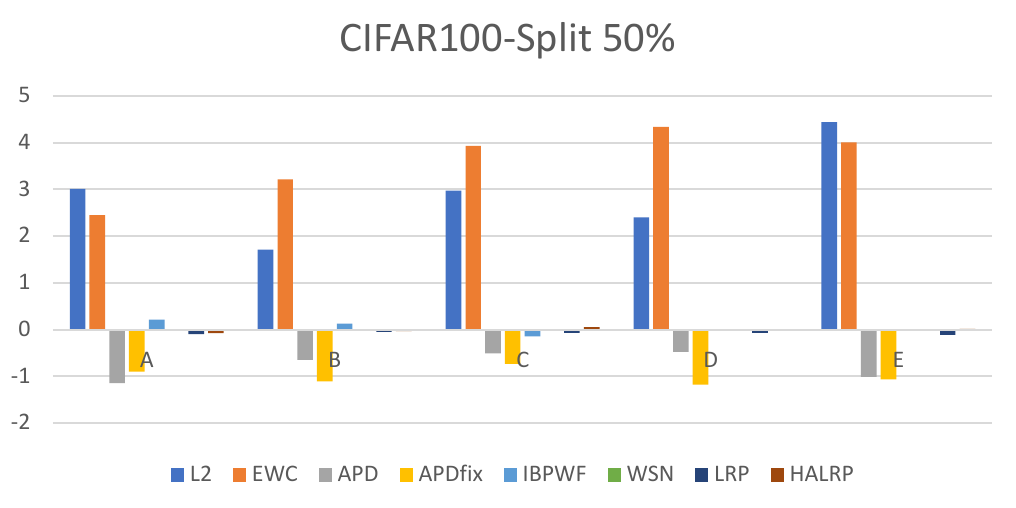}
		\caption{50\% of training data}
		\label{fig:forgetting_split50}
	\end{subfigure}
	\begin{subfigure}[b]{0.24\linewidth}
		\centering
		\includegraphics[width=\linewidth]{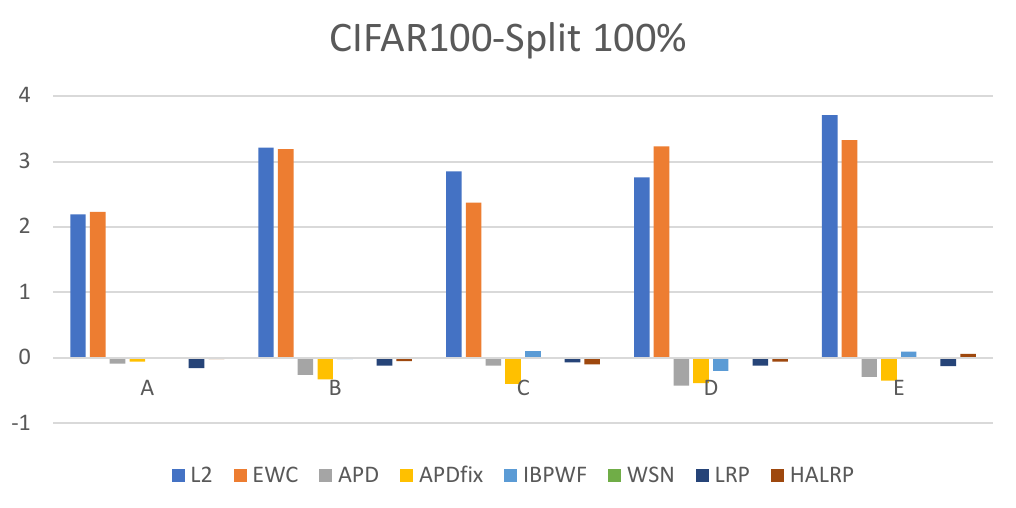}
		\caption{100\% of training data}
		\label{fig:forgetting_split100}
	\end{subfigure}
	\caption{Forgetting comparison on CIFAR100-Split with different task orders (A-E) under different amounts of training data.}
	\label{fig:comparison_forgetting_C100SPLIT}
\end{figure*}

\begin{figure*}[t]
	\centering
	\begin{subfigure}[b]{0.24\linewidth}
		\centering
		\includegraphics[width=\linewidth]{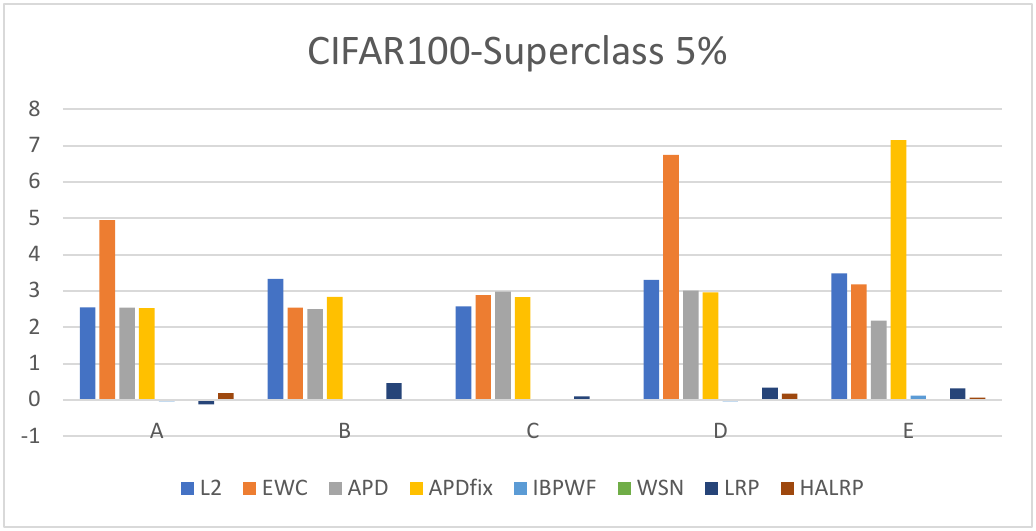}
		\caption{5\% of training data}
		\label{fig:forgetting_super5}
	\end{subfigure}
	\centering
	\begin{subfigure}[b]{0.24\linewidth}
		\centering
		\includegraphics[width=\linewidth]{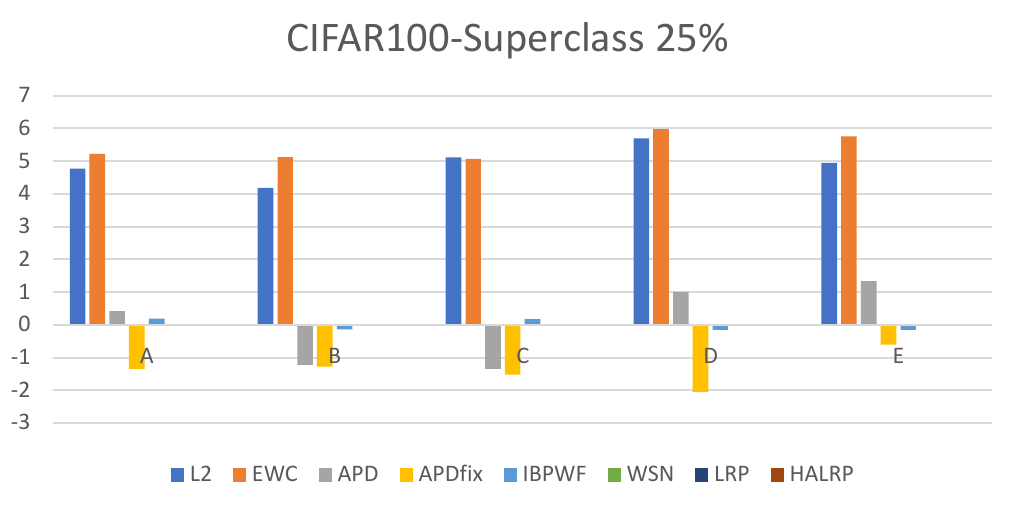}
		\caption{25\% of training data}
		\label{fig:forgetting_super25}
	\end{subfigure}
	\begin{subfigure}[b]{0.24\linewidth}
		\centering
		\includegraphics[width=\linewidth]{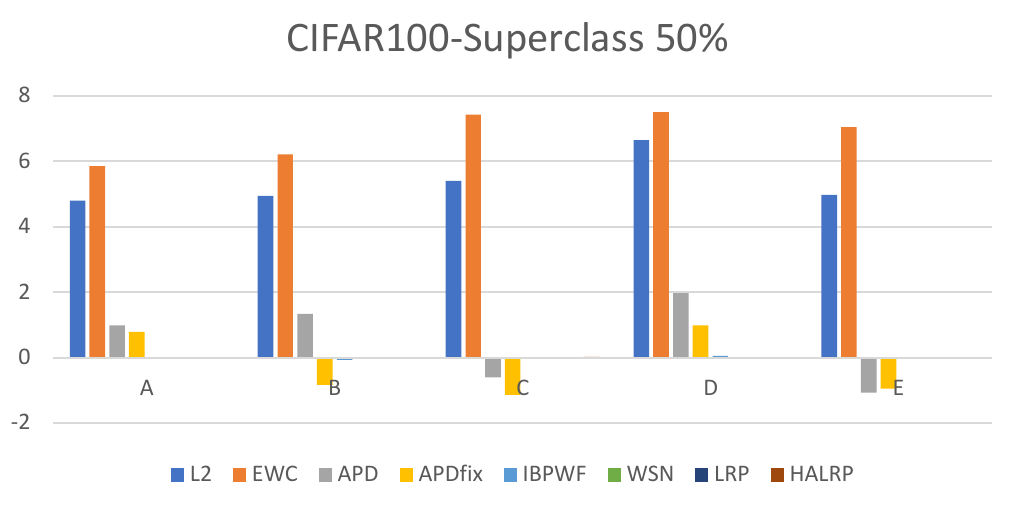}
		\caption{50\% of training data}
		\label{fig:forgetting_super50}
	\end{subfigure}
	\begin{subfigure}[b]{0.24\linewidth}
		\centering
		\includegraphics[width=\linewidth]{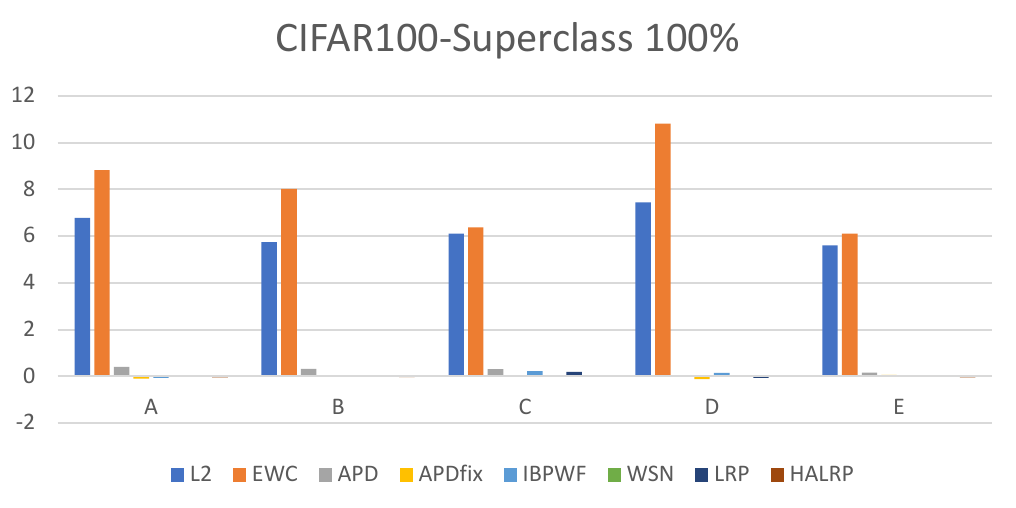}
		\caption{100\% of training data}
		\label{fig:forgetting_super100}
	\end{subfigure}
	\caption{Forgetting on CIFAR100-SuperClass with different task orders (A-E) under different amounts of training data.}
	\label{fig:comparison_forgetting_C100_SUPER}
\end{figure*}

\subsection{Comparing with other low-rank methods}
\label{app-exp:low-rank}

Our work also shares some similarities with the low-rank-decomposition-based method IBPWF~\cite{mehta2021continual}. As discussed in~\cite{hyder2022incremental} and in our paper (Sec. 1), this method requires larger ranks to accept higher accuracy. Furthermore, as pointed out by~\cite{verma2021efficient}, IBPWF leverages Bayesian non-parametric to let the data dictate expansion, but the benchmarks considered in Bayesian methods have been limited to smaller datasets, like MNIST and CIFAR-10. In addition to the experimental comparisons with IBPWF reported in the paper, we further report more results in Table~\ref{Tab:ComparsionWithIBPWF}. We can observe that our method outperforms IBPWF with a large gap in terms of Accuracy and MOPD$\downarrow$ \& AOPD$\downarrow$. Furthermore, we observe that IBPWF cannot perform well on P-MNIST, which is $10\times$ the number of data of MNIST. The gap is  
consistent with observations in~\cite{verma2021efficient}. 

\subsection{Performance on Alleviate Forgetting on Different Task Orders}
\label{app-exp:forgetting}
In the main paper, we report the forgetting performance on the average of five task orders (A-E) on CIFAR100 Splits and SuperClass. In this part, we provide the performance on forgetting under each task order in Fig.~\ref{fig:comparison_forgetting_C100SPLIT} and Fig.~\ref{fig:comparison_forgetting_C100_SUPER}, under different amounts (i.e., $5\% \sim 100\%$) of training data from.

\subsection{Analysis of Memory Overhead}\label{sec:append:memory}
\begin{figure*}[h]
	\centering
	\begin{subfigure}[b]{\textwidth}
		\centering
		\includegraphics[width=\textwidth]{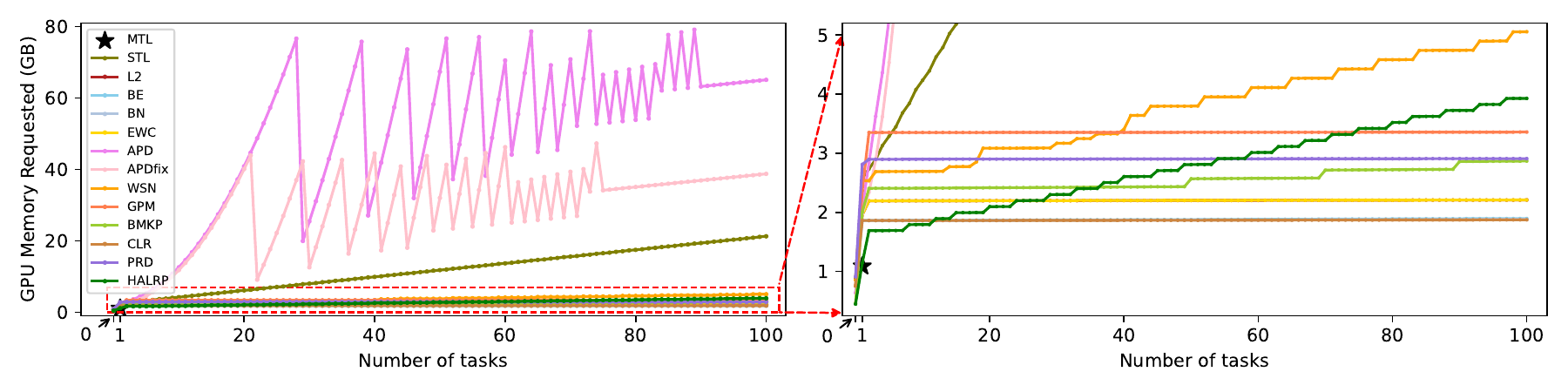}
		\caption{GPU memory usage for Omniglot-Rotation with LeNet}
		\label{Fig:memory-overhead-omni}
	\end{subfigure}
	
	\begin{subfigure}[b]{\textwidth}
		\centering
		\includegraphics[width=\textwidth]{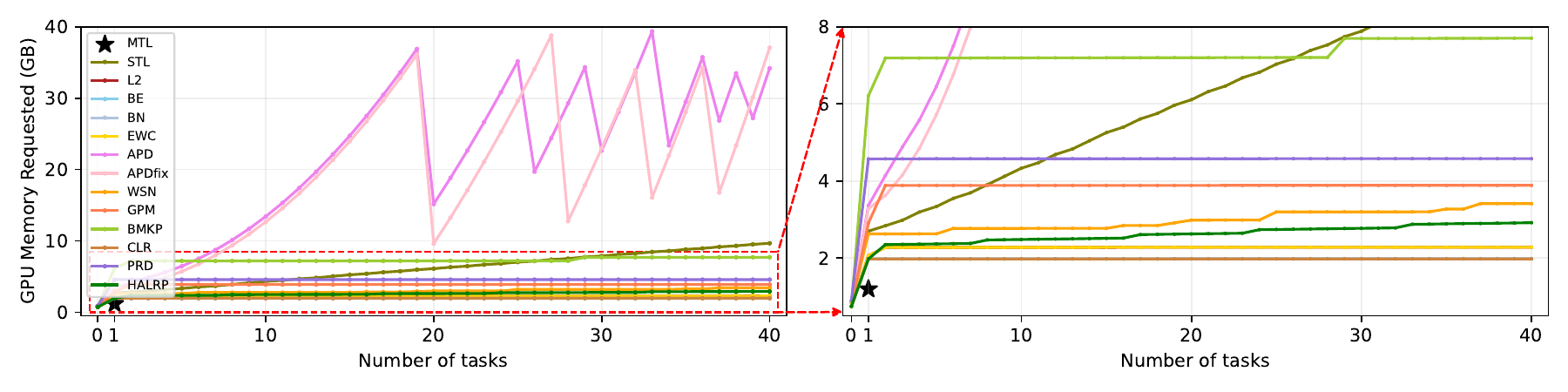}
		\caption{GPU memory usage for TinyImageNet 40-split with AlexNet}
		\label{Fig:memory-overhead-tiny40}
	\end{subfigure}
	\caption{Empirical statistics of GPU memory usage. The local zone with red rectangle in left is shown in the right plot.}
	\label{Fig:memory-overhead}
\end{figure*}
We also provided some quantitative results to support the memory efficiency of our proposed HALRP. To compare the GPU memory overhead among different methods, we visualize the amount of GPU memory requested by each method along the increase of task numbers during the training process. Specifically, we tracked the GPU memory usage of two groups of representative and challenging experimental scenarios: (1) Omniglot Rotation dataset with LeNet, which has 100 tasks in total; (2) TinyImageNet dataset with AlexNet, which has the largest number of parameters (i.e., \#parameters $\approx$ 62 million, FLOPS $\approx$ 724 million in AlexNet) and the second largest number of tasks (i.e., 40 tasks). To make fair comparisons, we adopted the same hyperparemeters (i.e., batch size, number of threads used in the data loader) for all the methods under each scenario, and moved all training-related operations (i.e., Singular Value Decomposition) onto the GPU devices. To monitor the GPU memory usage, we embedded some snippets with the Python package \texttt{GPUtil} (\href{https://pypi.org/project/GPUtil/}{https://pypi.org/project/GPUtil/}) into the training code. The amount of GPU memory requested during the training process is shown in the following Figure~\ref{Fig:memory-overhead}.
\\ \hspace*{\fill}\\
According to the visualization results in Figure~\ref{Fig:memory-overhead}, we can observe that our proposed HALRP is still memory efficient compared to other baseline methods under these two challenging scenarios. Specifically, under the scenario for Omniglot Rotation datasets with LeNet (Fig.~\ref{Fig:memory-overhead-omni}, we can see that the amount of GPU memory requested by HALRP increased along the tasks but finally not exceeded 4GB for these 100 tasks. In contrast, APD and APDfix needed more GPU memory during training (i.e., maximum 80GB for APD and 45GB for APDfix), making the related experiments hard to be reproduced unless on some specific GPUs like NVIDIA A100 80G. As for TinyImageNet dataset with AlexNet, our proposed HALRP only requested about 3GB GPU memory at the end of the 40-th task, which is much lower than APD and APDfix that needed at least 40GB memory during training, as well as BMKP that needs up to 8GB.
\\ \hspace*{\fill}\\
To summarize, our proposed method HALRP won't introduce heavy memory overhead. In the contrast, it is memory-efficient. The empirical results show that the requirement of training is easy to be satisfied and the scalability under the above challenging scenarios is easy to achieve.

\section{Experimental Details}
\label{Sect.ExpDetails}

\subsection{Dataset Preparation}

\textbf{CIFAR100 Splits/SuperClass}
We used the CIFAR100 Splits/SuperClass dataset following the evaluation protocol of~\cite{APD:YoonKYH20}, and follow the task order definition of~\cite{APD:YoonKYH20} to test the algorithms with five different task orders (A-E).

For the CIFAR100 Split, the task orders are defined as:
\begin{itemize}
	\item Order A: [0,1,2,3,4,5,6,7,8,9]
	\item Order B: [1, 7, 4, 5, 2, 0, 8, 6, 9, 3]
	\item Order C: [7, 0, 5, 1, 8, 4, 3, 6, 2, 9]
	\item Order D: [5, 8, 2, 9, 0, 4, 3, 7, 6, 1]
	\item Order E: [2, 9, 5, 4, 8, 0, 6, 1, 3, 7]
\end{itemize}

For the CIFAR100 SuperClass, the task orders are:
\begin{itemize}
	\item Order A: [0, 1, 2, 3, 4, 5, 6, 7, 8, 9, 10, 11, 12, 13, 14, 15, 16, 17, 18, 19]
	\item Order B: [15, 12, 5, 9, 7, 16, 18, 17, 1, 0, 3, 8, 11, 14, 10, 6, 2, 4, 13, 19]
	\item Order C: [17, 1, 19, 18, 12, 7, 6, 0, 11, 15, 10, 5, 13, 3, 9, 16, 4, 14, 2, 8]
	\item Order D: [11, 9, 6, 5, 12, 4, 0, 10, 13, 7, 14, 3, 15, 16, 8, 1, 2, 19, 18, 17]
	\item Order E: [6, 14, 0, 11, 12, 17, 13, 4, 9, 1, 7, 19, 8, 10, 3, 15, 18, 5, 2, 16]
\end{itemize}

Furthermore, as discussed in the paper, we demonstrate the performance when handling limited training data. In this regard, we randomly select $5\%$, $25\%$, $50\%$ training data from each task and report the corresponding accuracies.

\textbf{P-MNIST}
We follow~\cite{kang2022forget} to evaluate the algorithms' performance on the P-MNIST dataset. Each task of P-MNIST is a random permutation of the original MNIST pixel. We follow~\cite{kang2022forget,ebrahimi2019uncertainty} to generate the train/val/test splits and to create 10 sequential tasks using different permutations, and each task has 10 classes. We randomly generate five different task orders with five different seeds. 
\begin{itemize}
	\item seed 0: [6, 1, 9, 2, 7, 5, 8, 0, 3, 4]
	\item seed 1: [2, 9, 6, 4, 0, 3, 1, 7, 8, 5]
	\item seed 2: [4, 1, 5, 0, 7, 2, 3, 6, 9, 8]
	\item seed 3: [5, 4, 1, 2, 9, 6, 7, 0, 3, 8]
	\item seed 4: [3, 8, 4, 9, 2, 6, 0, 1, 5, 7]
\end{itemize}

\textbf{Five dataset}
It uses a sequence of 5 different benchmarks including CIFAR10~\cite{krizhevsky2009learning}, MNIST~\cite{deng2012mnist}, notMNIST~\cite{bulatov2011notmnist}, FashionMNIST~\cite{xiao2017fashion} and SVHN~\cite{yuval2011reading}. Each benchmark contains 10 classes. We follow~\cite{kang2022forget} to generate the train/val/test splits and to create 10 sequential tasks using different permutations, and each task has 10 classes. We randomly generate five different task orders with five different seeds.
\begin{itemize}
	\item seed 0: [2, 0, 1, 3, 4]
	\item seed 1: [1, 0, 4, 2, 3]
	\item seed 2: [2, 4, 1, 3, 0]
	\item seed 3: [3, 4, 1, 0, 2]
	\item seed 4: [0, 3, 1, 4, 2]
\end{itemize}

\textbf{Omniglot-rotation}
We split this dataset~\cite{lake2015human} into 100 12-way classification tasks. We follow the train/val/test split of~\cite{kang2022forget}. The five task order adopted in our experiments are:
\begin{itemize}
	\item seed 0: [26, 86, 2, 55, 75, 93, 16, 73, 54, 95, 53, 92, 78, 13, 7, 30, 22, 24, 33, 8, 43, 62, 3, 71, 45, 48, 6, 99, 82, 76, 60, 80, 90, 68, 51, 27, 18, 56, 63, 74, 1, 61, 42, 41, 4, 15, 17, 40, 38, 5, 91, 59, 0, 34, 28, 50, 11, 35, 23, 52, 10, 31, 66, 57, 79, 85, 32, 84, 14, 89, 19, 29, 49, 97, 98, 69, 20, 94, 72, 77, 25, 37, 81, 46, 39, 65, 58, 12, 88, 70, 87, 36, 21, 83, 9, 96, 67, 64, 47, 44]
	\item seed 1: [80, 84, 33, 81, 93, 17, 36, 82, 69, 65, 92, 39, 56, 52, 51, 32, 31, 44, 78, 10, 2, 73, 97, 62, 19, 35, 94, 27, 46, 38, 67, 99, 54, 95, 88, 40, 48, 59, 23, 34, 86, 53, 77, 15, 83, 41, 45, 91, 26, 98, 43, 55, 24, 4, 58, 49, 21, 87, 3, 74, 30, 66, 70, 42, 47, 89, 8, 60, 0, 90, 57, 22, 61, 63, 7, 96, 13, 68, 85, 14, 29, 28, 11, 18, 20, 50, 25, 6, 71, 76, 1, 16, 64, 79, 5, 75, 9, 72, 12, 37]
	\item seed 2: [83, 30, 56, 24, 16, 23, 2, 27, 28, 13, 99, 92, 76, 14, 0, 21, 3, 29, 61, 79, 35, 11, 84, 44, 73, 5, 25, 77, 74, 62, 65, 1, 18, 48, 36, 78, 6, 89, 91, 10, 12, 53, 87, 54, 95, 32, 19, 26, 60, 55, 9, 96, 17, 59, 57, 41, 64, 45, 97, 8, 71, 94, 90, 98, 86, 80, 50, 52, 66, 88, 70, 46, 68, 69, 81, 58, 33, 38, 51, 42, 4, 67, 39, 37, 20, 31, 63, 47, 85, 93, 49, 34, 7, 75, 82, 43, 22, 72, 15, 40]
	\item seed 3: [93, 67, 6, 64, 96, 83, 98, 42, 25, 15, 77, 9, 71, 97, 34, 75, 82, 23, 59, 45, 73, 12, 8, 4, 79, 86, 17, 65, 47, 50, 30, 5, 13, 31, 88, 11, 58, 85, 32, 40, 16, 27, 35, 36, 92, 90, 78, 76, 68, 46, 53, 70, 80, 61, 18, 91, 57, 95, 54, 55, 28, 52, 84, 89, 49, 87, 37, 48, 33, 43, 7, 62, 99, 29, 69, 51, 1, 60, 63, 2, 66, 22, 81, 26, 14, 39, 44, 20, 38, 94, 10, 41, 74, 19, 21, 0, 72, 56, 3, 24]
	\item seed 4: [20, 10, 96, 16, 63, 24, 53, 97, 41, 47, 43, 2, 95, 26, 13, 37, 14, 29, 35, 54, 80, 4, 81, 76, 85, 60, 5, 70, 71, 19, 65, 62, 27, 75, 61, 78, 18, 88, 7, 39, 6, 77, 11, 59, 22, 94, 23, 12, 92, 25, 83, 48, 17, 68, 31, 34, 15, 51, 86, 82, 28, 64, 67, 33, 45, 42, 40, 32, 91, 74, 49, 8, 30, 99, 66, 56, 84, 73, 79, 21, 89, 0, 3, 52, 38, 44, 93, 36, 57, 90, 98, 58, 9, 50, 72, 87, 1, 69, 55, 46]
\end{itemize}

\textbf{TinyImageNet} This dataset contains 200 classes. In our experiments, we adopted two split settings: 20 split and 40 split.

Each task in the 20-split setting consists of 10 classes. We adopted five random tasks orders as follows:
\begin{itemize}
	\item seed 0: [10, 18, 16, 14, 0, 17, 11, 2, 3, 9, 5, 7, 4, 19, 6, 15, 8, 1, 13, 12]
	\item seed 1: [11, 5, 17, 19, 9, 0, 16, 1, 15, 6, 10, 13, 14, 12, 7, 3, 8, 2, 18, 4]
	\item seed 2: [7, 6, 17, 8, 19, 15, 13, 0, 3, 9, 14, 4, 10, 12, 16, 5, 11, 18, 2, 1]
	\item seed 3: [8, 3, 6, 5, 15, 16, 2, 12, 0, 1, 13, 10, 19, 9, 14, 11, 4, 17, 18, 7]
	\item seed 4: [17, 19, 10, 14, 5, 18, 16, 11, 4, 8, 6, 0, 13, 1, 2, 15, 12, 3, 9, 7]
\end{itemize}

Each task in the 40-split setting consists of 5 classes. We adopted five random tasks orders as follows:
\begin{itemize}
	\item seed 0: [22, 20, 25, 4, 10, 15, 28, 11, 18, 29, 27, 35, 37, 2, 39, 30, 34, 16, 36, 8, 13, 5, 17, 14, 33, 7, 32, 1, 26, 12, 31, 24, 6, 23, 21, 19, 9, 38, 3, 0]
	\item seed 1: [2, 31, 3, 21, 27, 29, 22, 39, 19, 26, 32, 17, 30, 36, 33, 28, 4, 14, 10, 35, 23, 24, 34, 20, 18, 25, 6, 13, 7, 38, 1, 16, 0, 15, 5, 11, 9, 8, 12, 37]
	\item seed 2: [27, 9, 14, 0, 2, 30, 13, 36, 17, 37, 38, 29, 24, 12, 16, 1, 33, 23, 25, 19, 32, 10, 4, 6, 3, 34, 5, 28, 20, 26, 39, 21, 35, 31, 7, 11, 18, 22, 8, 15]
	\item seed 3: [29, 16, 9, 27, 4, 18, 28, 38, 15, 26, 25, 11, 30, 32, 13, 34, 39, 37, 5, 1, 31, 2, 22, 17, 14, 7, 12, 20, 36, 6, 23, 35, 33, 10, 19, 21, 0, 8, 3, 24]
	\item seed 4: [28, 39, 4, 15, 26, 20, 31, 7, 16, 11, 19, 33, 12, 18, 38, 13, 10, 22, 32, 25, 17, 36, 29, 14, 2, 24, 27, 6, 35, 34, 21, 37, 0, 3, 30, 9, 8, 23, 1, 5]
\end{itemize}

\begin{table*}[ht]
	\centering
	\resizebox{0.8\linewidth}{!}{
		\begin{tabular}{@{}l|c|c|c|cc|c|cc|cc@{}}
			\toprule
			Dataset    & CIFAR100 Split & CIFAR100 Super & PMNIST & \multicolumn{2}{c|}{Five dataset} & Omniglot & \multicolumn{2}{c|}{TinyImageNet 40-split} & \multicolumn{2}{c}{TinyImageNet 20-split} \\
			\midrule
			Network & LeNet & LeNet &  LeNet    & AlexNet & ResNet18    & extended LeNet   &  AlexNet & ResNet18 &  AlexNet & ResNet18   \\
			\midrule
			$n$    & 20 & 20 & 12     & 12 & 12    & 20     & 50  & 50  & 50  & 50 \\
			$n_r$  & 1  & 1  & 1      & 1 &  3     & 1     & 20 & 25  & 25 & 25 \\
			$\alpha$ & 0.9  & 0.9 & 0.9    & 0.9 & 0.95      & 0.99    & 0.9 & 0.9 & 0.9 & 0.9   \\
			LR       & 1e-3 & 1e-3 & 1e-3  & 1e-3 & 1e-3    & 5e-3    & 1e-3  & 5e-4  & 1e-3 & 5e-4     \\
			$\lambda_0$ & 1e-4 & 1e-4 & 1e-6     & 1e-6 & 1e-6    & 9e-5     & 5e-4 & 5e-4 & 1e-5 & 5e-4  \\
			$\lambda_1$ & 1e-4 & 1e-4 & 1e-3     & 1e-4 & 1e-4     & 1e-4    & 1e-4 & 1e-4 & 5e-4 & 1e-4   \\
			Bcsz        & 128 & 128 & 128    & 128 & 128      & 16    & 32 & 32  & 32 & 32      \\ \bottomrule
	\end{tabular}}
	\caption{Hyperparameters for the experiments. $n$: total epoch. $n_r$: warm-up epochs for a new task. LR: Learning rate. $\lambda_0,\lambda_1$: coefficients for the regularization terms as discussed in Appendix~\ref{Sect:discussionRegularization}. Bcsz: training batch size.}
	\label{tab:hyperparameters}
\end{table*}

\subsection{Model Architecture}
\label{Sect:app-archi}
In the experimental evaluations, we implement various kinds of backbone architectures to demonstrate our perturbation method for different deep models. We introduce the model architectures used in the paper 

\textbf{LeNet}
We implement two kinds of LeNet models: \textbf{1):} For CIFAR100 Splits/SuperClass and P-MNIST, we implement the general LeNet model with neurons 20-20-50-800-500. \textbf{2):} For Omniglot-Rotation, we follow~\cite{APD:YoonKYH20,kang2022forget} to implement the enlarged LeNet model with neurons 64-128-2500-1500.

\textbf{AlexNet}
For the experiments on Five-dataset and TinyImageNet, we implement AlexNet model by following~\cite{saha2021gradient,kang2022forget}.

\textbf{ResNet-18}
We adopted the reduced ResNet-18 model (i.e., reduce half of the filters in each convolutional layer from the standard Resnet18) on the Five-dataset and TinyImageNet by following~\cite{kang2022forget}.

\subsection{Training Hyperparameters}\label{Sect:hyperparameters}
We reimplement the baselines by rigorously following the official code release or publicly accessible implementations and tested our proposed algorithm with a unified test-bed with the same hyperparameters to get fair comparison results. The training details for our experiments are illustrated in Table~\ref{tab:hyperparameters}. The hyperparameters are selected via grid search. We also provide descriptions of the hyperparameters in the source code.

\end{document}